\newif\ifcomm
\newcommand{\todoy}[2][]{\todo[color=red, #1]{#2}}
\newif\ifcomm
\newif\iflong
\newtheorem{thm}{Theorem}
\newtheorem{cor}[thm]{Corollary}
\newtheorem{lem}[thm]{Lemma}
\theoremstyle{remark}
\newtheorem{rem}[thm]{Remark}
\newtheorem{remark}[thm]{Remark}
\newcounter{assumption}
\renewcommand{\theassumption}{A\arabic{assumption}}
\newenvironment{ass}[1][]{\begin{trivlist}\item[] \refstepcounter{assumption}%
 {\bf Assumption\ \theassumption\ #1} }{
 \ifvmode\smallskip\fi\end{trivlist}}
\newcommand{\norm}[1]{\left\Vert#1\right\Vert}
\newcommand{\abs}[1]{\left\vert#1\right\vert}
\newcommand{\trace}{\mathop{\rm trace}}
\newcommand{\set}[1]{\left\{#1\right\}}
\newcommand{\Real}{\mathbb R}                        
\newcommand{\real}{\mathbb R}                        
\newcommand{\Prob}[1]{{\mathbb P}\left(#1\right)}    
\newcommand{\EE}[1]{{\mathbb E}\left[#1\right]}      
\newcommand{\E}{{\mathbb E}}                         
\newcommand{\MC}{\mathcal{C}}
\newcommand{\MD}{\mathcal{D}}
\newcommand{\eps}{\varepsilon}                       
\newcommand{\ra}{\rightarrow}
\newcommand{\argmax}{\mathop{\rm argmax}}
\newcommand{\ip}[2]{\langle #1,#2 \rangle}
\newcommand{\eqdef}{\stackrel{\mbox{\rm\tiny def}}{=}}
\newcommand{\beq}{\begin{equation}}
\newcommand{\eeq}{\end{equation}}
\newcommand{\beqa}{\begin{eqnarray}}
\newcommand{\eeqa}{\end{eqnarray}}
\newcommand{\beqan}{\begin{eqnarray*}}
\newcommand{\eeqan}{\end{eqnarray*}}
\newcommand{\ben}{\begin{eqnarray*}}
\newcommand{\een}{\end{eqnarray*}}
\newcommand{\bea}{\begin{align*}}
\newcommand{\eea}{\end{align*}}
   \newcommand\comm[1]{\textcolor{blue}{ #1}}
   \newcommand{\mtodo}[2]{\todo{{\bf #1}: #2}} 
   \def\here#1{{\bf $\langle\langle$#1$\rangle\rangle$}}
   \newcommand\comm[1]{}
   \newcommand{\mtodo}[2]{}
   \def\here#1{}
\newcommand{\sfrac}{\nicefrac}
\renewcommand{\phi}{\varphi}
\newcommand{\e}{\mathbf{e}}
\renewcommand{\eps}{\varepsilon}
\newcommand{\ttop}{^\top}
\newcommand{\Stexpra}{\sum_{k=1}^{t} \eta_k m_{k-1}}
\newcommand{\Vinit}{V}
\newcommand{\CPE}[2]{\EE{ #1 \,| #2 }}
\newcommand{\normm}[2]{\norm{#1}_{#2}}
\newcommand{\Vta}[1]{\overline{V}_{#1}}
\newcommand{\Vt}{\Vta{t}}
\newcommand{\Vtta}[1]{V_{#1}}
\newcommand{\Vtt}{\Vtta{t}}
\newcommand{\beps}{\mathbb{\varepsilon}}
\newcommand{\FF}{{\cal F}}
\newcommand{\hth}{\hat{\theta}}
\newif\ifadmin
\title{Online Least Squares Estimation with Self-Normalized Processes: An Application to Bandit Problems\thanks{Submitted to the 24th Annual Conference on Learning Theory (COLT 2011)}}
\author{
Yasin Abbasi-Yadkori\\
\texttt{\small abbasiya@cs.ualberta.ca}\\
Dept. of Computing Science\\
University of Alberta 
\And
D\'avid P\'al\\
\texttt{\small dpal@cs.ualberta.ca} \\
Dept. of Computing Science \\
University of Alberta 
\And
Csaba Szepesv\'ari\\
\texttt{\small szepesva@cs.ualberta.ca }\\
Dept. of Computing Science \\
University of Alberta 
}
\begin{document}

\maketitle

\begin{abstract}
The analysis of online least squares estimation is at the heart of many
stochastic sequential decision-making problems.  We employ tools from the
self-normalized processes to provide a simple and self-contained proof of a
tail bound of a vector-valued martingale.  We use the bound to construct new
tighter confidence sets for the least squares estimate. 

We apply the confidence sets to several online decision problems, such as the
multi-armed and the linearly parametrized bandit problems.  The confidence sets
are potentially applicable to other problems such as sleeping bandits,
generalized linear bandits, and other linear control problems. 

We improve the regret bound of the Upper Confidence Bound (UCB) algorithm
of~\citet{AuerBandit} and show that its regret is with high-probability a
problem dependent constant. In the case of linear bandits~\citep{DaniStoch}, we
improve the problem dependent bound in the dimension and number of time steps.
Furthermore, as opposed to the previous result, we prove that our bound holds
for small sample sizes, and at the same time the worst case bound is improved
by a logarithmic factor and the constant is improved. 
\end{abstract}

\section{Introduction}

The least squares method forms a cornerstone of statistics and machine
learning. It is used as the main component of many stochastic sequential
decision problems, such as multi-armed bandit, linear bandits, and other linear
control problems. However, the analysis of least squares in these online
settings is non-trivial because of the correlations between data points.
Fortunately, there is a connection between online least squares estimation and
the area of self-normalized processes. Study of self-normalized processes has a
long history that goes back to Student and is treated in detail in recent
book by~\citet{delaPenaLaiShao:09}. Using these tools we provide a proof of a
bound on the deviation for vector-valued martingales. A
less general version of the bound can be found already in~\cite{DeLaPenaKlaLai04,
delaPenaLaiShao:09}. Additionally our proof, based on the method of mixtures, is new, simpler and self-contained.
The bound improves the previous bound of~\cite{RusTsi10} and it is applicable
to virtually any online least squares problem.

The bound that we derive, gives immediately rise to tight confidence sets for
the online least squares estimate that can replace the confidence sets in existing
algorithms. In particular, the confidence sets can be used in the UCB algorithm for the
multi-armed bandit problem, the \textsc{ConfidenceBall} algorithm
of~\citet{DaniStoch} for the linear bandit problem, and \textsc{LinRel}
algorithm of~\cite{Auer02:JMLR} for the associative reinforcement learning
problem. We show that this leads to improved performance of these algorithms.
Our hope is that the new confidence sets can be used to improve the performance
of other similar linear decision problems.

The multi-armed bandit problem, introduced by \citet{ro52}, is a game between
the learner and the environment. At each time step, the learner chooses one of
$K$ actions and receives a reward which is generated independently at random
from a fixed distribution associated with the chosen arm. The objective of the
learner is to maximize his total reward.  The performance of the learner is
evaluated by the regret, which is defined as the difference between his total
reward and the total reward of the best action. \citet{LaiRo85} prove a
$(\sum_{i\neq i_*}1/D(p_j,p_{i_*})-o(1))\log T$ lower bound on the expected
regret of any algorithm, where $T$ is the number of time steps, $p_{i_*}$ and
$p_i$ are the reward distributions of the optimal arm and arm $i$ respectively,
and $D$ is the KL-divergence.

\citet{AuerBandit} designed the UCB algorithm and proved a finite-time
logarithmic bound on its regret. He used Hoeffding's inequality to construct
confidence intervals and obtained a $O((K \log T)/ \Delta)$ bound on the
expected regret, where $\Delta$ is the difference between the expected rewards
of the best and the second best action.\todoy{Cite high-probability bound for
UCB by Bubeck.}~ We modify UCB so that it uses our new confidence sets and we
show a stronger result. Namely, we show that with probability $1-\delta$, the
regret of the modified algorithm is $O(K \log(1/\delta)/\Delta)$. Seemingly,
this result contradicts the lower bound of \citet{LaiRo85}, however our
algorithm depends on $\delta$ which it receives as an input. The expected
regret of the modified algorithm with $\delta = 1/T$ matches the regret of 
the original algorithm. 

In the linear bandit problem, the learner chooses repeatedly actions from a
fixed subset of $\Real^d$ and receives a random reward, expectation of which is
a linear function of the action. \citet{DaniStoch} proposed the
\textsc{ConfidenceBall} algorithm and showed that its regret is at most $O(d
\log(T) \sqrt{T \log(T/\delta)})$ with probability at most $1-\delta$. We
modify their algorithm so that it uses our new confidence sets and we show that
its regret is at most $O(d\log(T)\sqrt{T} + \sqrt{dT\log(T/\delta)})$.
Additionally, constants in our bound are smaller, and our bound holds for all
$T \ge 1$, as opposed the previous one which holds only for sufficiently large
$T$.  \citet{DaniStoch} prove also a problem dependent regret bound. Namely,
they show that the regret of their algorithm is $O(\frac{d^2}{\Delta} \log^2 T
\log(T/\delta))$ where $\Delta$ is the ``gap'' as defined in~\citep{DaniStoch}.
For our modified algorithm we prove an improved
$O(\frac{\log(1/\delta)}{\Delta}(\log T + d \log \log T)^2)$ bound.

\subsection{Notation}
We use $\|\cdot\|$ to denote the 2-norm.
For a positive definite matrix $A\in \real^{d\times d}$, the weighted $2$-norm is defined by $\|x\|^2_A = x\ttop A x$, where $x\in \real^d$. The inner product is denoted by $\ip{\cdot}{\cdot}$ and the weighted inner-product $x\ttop A y = \ip{x}{y}_{A}$.
We use $\lambda_{\min}(A)$ to denote the minimum eigenvalue of the positive definite matrix $A$.
We use $A\succ 0$ to denote that $A$ is positive  definite, while we use $A\succeq 0$ to denote that it is positive semidefinite. The same notation is used to denote the Loewner partial order of matrices.
We shall use  $\e_i$ to denote the $i^{\rm th}$ unit vector, i.e., for all $j\neq i $, $\e_{ij}=0$ and $\e_{ii}=1$.


\section{Vector-Valued Martingale Tail Inequalities}
Let $(\FF_k;k\ge 0)$ be a filtration, $(m_k;k\ge 0)$ be an $\real^d$-valued stochastic process adapted to $(\FF_k)$, $(\eta_k;k\ge 1)$ be a real-valued martingale difference process adapted to $(\FF_k)$.
Assume that $\eta_k$ is conditionally sub-Gaussian in the sense that there exists some $R>0$ such that for any $\gamma\in \real$, $k\ge 1$,
\begin{equation}\label{eq:subgauss}
\E[ \exp(\gamma \eta_k ) \,|\, \FF_{k-1} ] \le \exp\left( \frac{\gamma^2 R^2}{2}\right) \quad \mathrm{a.s.}
\end{equation}
Consider the martingale 
\beq\label{eq:st}
S_t  = \Stexpra
\eeq
and the matrix-valued processes 
\beq\label{eq:vt}
\Vtt = \sum_{k=1}^t m_{k-1}m_{k-1}\ttop, \qquad \Vt  = \Vinit + \Vtt, \quad t\ge 0,
\eeq
where $\Vinit $ is an $\FF_{0}$-measurable, positive definite matrix. In particular, assume that with probability one, the eigenvalues of $\Vinit $ are larger than $\lambda_0>0$ and that $\|m_k\|\le L$ holds a.s. for any $k\ge 0$.

The following standard inequality plays a crucial role in the following developments:
\begin{lem}\label{lem:basicineq}
Consider $(\eta_t)$, $(m_t)$ as defined above and let $\tau$ be a stopping time with respect to the filtration $(\FF_t)$.
Let $\lambda\in \real^d$ be arbitrary and consider
\[
P_t^\lambda=\exp\left( \sum_{k=1}^t\,\, \left[\frac{ \eta_k \ip{\lambda}{m_{k-1}}}{R} - \frac12 \, \ip{\lambda}{m_{k-1}}^2 \right]\right).
\]
Then $P_\tau$ is almost surely well-defined and
\[
\EE{ P_\tau^\lambda } \le 1.
\]
\end{lem}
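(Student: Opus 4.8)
The plan is to recognize $P_t^\lambda$ as a product of one-step multiplicative increments, show that $(P_t^\lambda)_{t\ge0}$ is a nonnegative supermartingale with mean at most one, and then transfer this to the stopping time $\tau$ via the supermartingale convergence theorem together with optional stopping and Fatou's lemma.

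\textbf{Step 1 (supermartingale property).} I would set $P_0^\lambda=1$ and write $P_t^\lambda=\prod_{k=1}^t D_k$, where
\[
D_k=\exp\!\left(\frac{\eta_k\ip{\lambda}{m_{k-1}}}{R}-\frac12\ip{\lambda}{m_{k-1}}^2\right),
\]
so that $P_t^\lambda=P_{t-1}^\lambda D_t$ with $P_{t-1}^\lambda$ being $\FF_{t-1}$-measurable. Because $m_{k-1}$ is $\FF_{k-1}$-measurable, $\ip{\lambda}{m_{k-1}}$ is $\FF_{k-1}$-measurable, so
\[
\E[D_k\mid\FF_{k-1}]=\exp\!\left(-\tfrac12\ip{\lambda}{m_{k-1}}^2\right)\,\E\!\left[\exp\!\left(\tfrac{\ip{\lambda}{m_{k-1}}}{R}\,\eta_k\right)\,\Big|\,\FF_{k-1}\right],
\]
and applying the conditional sub-Gaussian hypothesis \eqref{eq:subgauss} with $\gamma=\ip{\lambda}{m_{k-1}}/R$ bounds the conditional moment generating factor by $\exp(\tfrac12\ip{\lambda}{m_{k-1}}^2)$. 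Hence $\E[D_k\mid\FF_{k-1}]\le1$ and $\E[P_t^\lambda\mid\FF_{t-1}]\le P_{t-1}^\lambda$, so $(P_t^\lambda)$ is a nonnegative supermartingale; in particular it is integrable and $\E[P_t^\lambda]\le\E[P_0^\lambda]=1$ for all $t$.

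\textbf{Step 2 (passing to $\tau$).} A nonnegative supermartingale converges almost surely to a finite limit, so $P_\infty^\lambda:=\lim_{t\to\infty}P_t^\lambda$ exists and $P_\tau^\lambda$ is well-defined even on $\{\tau=\infty\}$. Optional stopping for supermartingales makes the stopped process $(P_{t\wedge\tau}^\lambda)_{t\ge0}$ a nonnegative supermartingale, so $\E[P_{t\wedge\tau}^\lambda]\le1$ for each finite $t$; since $P_{t\wedge\tau}^\lambda\to P_\tau^\lambda$ a.s., Fatou's lemma yields $\E[P_\tau^\lambda]\le\liminf_{t\to\infty}\E[P_{t\wedge\tau}^\lambda]\le1$.

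\textbf{Main obstacle.} The one subtle point is in Step 1: hypothesis \eqref{eq:subgauss} is stated for each \emph{deterministic} $\gamma\in\R$, whereas I want to evaluate it at the random, $\FF_{k-1}$-measurable value $\gamma=\ip{\lambda}{m_{k-1}}/R$. Conditionally on $\FF_{k-1}$ this quantity is effectively a constant, so the substitution should be justified, but to be rigorous one establishes \eqref{eq:subgauss} simultaneously for all rational $\gamma$ on one almost-sure event and extends to all reals by continuity of $\gamma\mapsto\E[\exp(\gamma\eta_k)\mid\FF_{k-1}]$ (or argues through a regular conditional distribution of $\eta_k$ given $\FF_{k-1}$). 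Everything else — the supermartingale property, the convergence theorem, optional stopping, and Fatou — is standard; the convergence theorem is needed precisely because $\tau$ may be infinite.
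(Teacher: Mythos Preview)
Your proof is correct and follows essentially the same route as the paper: factor $P_t^\lambda$ into the increments $D_k$, use the conditional sub-Gaussian bound to get $\E[D_k\mid\FF_{k-1}]\le1$ and hence the supermartingale property, then pass to $\tau$ via optional stopping, the supermartingale convergence theorem, and Fatou's lemma. The only difference is that you explicitly flag (and resolve) the measurability issue of applying \eqref{eq:subgauss} at the $\FF_{k-1}$-measurable value $\gamma=\ip{\lambda}{m_{k-1}}/R$, a point the paper's proof glosses over.
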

\begin{proof}
The proof is standard (and is given only for the sake of completeness).
We claim that $P_t =P_t^\lambda$ is a supermartingale.
Let
\[
D_k = \exp\left( \frac{ \eta_k \ip{\lambda}{m_{k-1}}}{R} - \frac12 \, \ip{\lambda}{m_{k-1}}^2 \right).
\]
Observe that by~\eqref{eq:subgauss}, we have $\CPE{ D_k }{\FF_{k-1}} \le 1$. 
 Clearly, $D_k$ is $\FF_k$-adapted, as is $P_k$.
Further,
\begin{align*}
 \EE{ P_t | \FF_{t-1} }  
 &= \CPE{ D_1 \cdots  D_{t-1}  D_t }{\FF_{t-1} }
   =  D_1 \cdots D_{t-1}\, \CPE{ D_t }{\FF_{t-1}} 
 \le  P_{t-1},
\end{align*}
showing that $(P_t)$ is indeed a supermartingale.

Now, this immediately leads to the desired result when $\tau=t$ for some deterministic time $t$.
This is based on the fact that the mean of any supermartingale can be bounded by the mean of its first element. In the case of $(P_t)$, for example, we have
$\EE{ P_t } =\EE{ \EE{ P_t|\FF_{t-1} } }\le \EE{ P_{t-1} } \le \ldots \le \EE{P_0} =\EE{D_0} = 1$.

Now, in order to consider the general case, let $S_t = P_{\tau\wedge t}$.\footnote{$\tau\wedge t$ is a shorthand notation for $\min(\tau,t)$.}
It is well known that $(S_t)$ is still a supermartingale with $\EE{S_t}\le \EE{S_0}=\EE{P_0}= 1$.
Further, since $P_t$ was nonnegative, so is $S_t$.
Hence, by the convergence theorem for nonnegative supermartingales, 
 almost surely, $\lim_{t\ra\infty} S_t$ exists, i.e., $P_\tau$ is almost surely well-defined.
Further, $\EE{ P_\tau } = \EE{ \liminf_{t\ra\infty} S_t } \le \liminf_{t\ra\infty} \EE{ S_t } \le 1$ by Fatou's Lemma.
\end{proof}
Before stating our main results, we give some recent results, which can essentially be extracted from the paper by \cite{RusTsi10}.
\begin{thm}
\label{thm:vvmartingaletail2}
Consider the processes $(S_t )$, $(\Vt )$ as defined above and let 
\begin{equation}\label{eq:kappadef}
\kappa = \sqrt{3+2\log((L^2+\trace(V))/\lambda_0)}.
\end{equation}
Then, for any $0<\delta< 1$, $t\ge 2$, with probability at least $1-\delta$,
\begin{equation}\label{eq:xitnormbound}
\| S_t  \|_{\Vt^{-1}} 
	   \le  2\, \kappa^2 R  \sqrt{ \log t} \,\sqrt{d\,\log(t) +\log(1/\delta)}\,.
\end{equation}
\end{thm}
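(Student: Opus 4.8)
The plan is to extract the bound from Lemma~\ref{lem:basicineq} (applied at the deterministic time $\tau=t$), essentially reproducing the covering argument of~\cite{RusTsi10}, in three steps: a Chernoff estimate, a dyadic peeling over the random ``scale'' of $\Vt$, and an $\epsilon$-net over the unit sphere. First, completing the square gives the variational identity $\|S_t\|_{\Vt^{-1}}=\sup_{u:\|u\|=1}\ip{u}{S_t}/\|u\|_{\Vt}$, so it suffices to control $\ip{u}{S_t}$ uniformly over unit $u$. Fixing $u$ and taking $\lambda=\gamma u$ in Lemma~\ref{lem:basicineq} gives $\EE{\exp\big(\tfrac{\gamma}{R}\ip{u}{S_t}-\tfrac12\gamma^2\|u\|_{\Vtt}^2\big)}\le 1$ for all $\gamma\in\real$; bounding the indicator of $\{\ip{u}{S_t}\ge a,\ \|u\|_{\Vtt}^2\le v\}$ from below by $\exp(\tfrac{\gamma a}{R}-\tfrac{\gamma^2 v}{2})$ and optimizing over $\gamma>0$ yields the self-normalized tail estimate
\[
\PP{\ip{u}{S_t}\ge a,\ \|u\|_{\Vtt}^2\le v}\le\exp\!\left(-\frac{a^2}{2R^2v}\right),\qquad a,v>0.
\]

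Since $\Vtt\succeq 0$ and $V\succ\lambda_0 I$, every unit $u$ obeys $\lambda_0<\|u\|_{\Vt}^2\le\lambda_{\max}(\Vt)\le\trace(V)+L^2t$ and $\|u\|_{\Vtt}^2\le\|u\|_{\Vt}^2$, so all relevant denominators $\|u\|_{\Vt}^2$ lie in an interval of ratio $O\big((L^2+\trace(V))t/\lambda_0\big)$. I would slice this interval (with two extra levels at the top) into $M=O\big(\log\tfrac{(L^2+\trace(V))t}{\lambda_0}\big)$ dyadic levels $I_j=[\lambda_0 2^{j-1},\lambda_0 2^{j})$, fix one $\epsilon$-net $\cN$ of the unit sphere with $\epsilon=\tfrac12\sqrt{\lambda_0/(\trace(V)+L^2t)}$ and $|\cN|\le(3/\epsilon)^d$, and apply the displayed estimate with $a_j=R\sqrt{2\lambda_0 2^{j}\log(|\cN|M/\delta)}$. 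A union bound over $\cN$ and over the $M$ levels (each summand is exactly $\delta/(|\cN|M)$) then gives: with probability at least $1-\delta$, for all $u'\in\cN$ and all $j$, $\|u'\|_{\Vtt}^2\le\lambda_0 2^{j}$ implies $\ip{u'}{S_t}<a_j$.

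The remainder is a discretization argument. On this event, for a unit $u$ let $I_j\ni\|u\|_{\Vt}^2$ and pick $u'\in\cN$ with $\|u-u'\|\le\epsilon$; since $\|u'\|_{\Vtt}\le\|u'\|_{\Vt}\le\|u\|_{\Vt}+\epsilon\sqrt{\lambda_{\max}(\Vt)}\le\tfrac32\|u\|_{\Vt}$, the level-$(j+2)$ estimate applies to $u'$, and (using $\lambda_0 2^{j}\le 2\|u\|_{\Vt}^2$)
\[
\ip{u}{S_t}\le\ip{u'}{S_t}+\epsilon\|S_t\|\le 4R\|u\|_{\Vt}\sqrt{\log(|\cN|M/\delta)}+\epsilon\|S_t\|.
\]
The point that closes the recursion with no a priori bound on $\|S_t\|$ is $\|S_t\|\le\sqrt{\lambda_{\max}(\Vt)}\,\|S_t\|_{\Vt^{-1}}$, which by the choice of $\epsilon$ makes $\epsilon\|S_t\|\le\tfrac12\sqrt{\lambda_0}\,\|S_t\|_{\Vt^{-1}}$; evaluating the display at the maximizing direction $u\propto\Vt^{-1}S_t$ and dividing by $\|u\|_{\Vt}\ge\sqrt{\lambda_0}$ gives $\|S_t\|_{\Vt^{-1}}\le 8R\sqrt{\log(|\cN|M/\delta)}$. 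Finally $\log(|\cN|M/\delta)=d\log(3/\epsilon)+\log M+\log(1/\delta)$ with $\log(3/\epsilon)=O\big(\log\tfrac{L^2+\trace(V)}{\lambda_0}+\log t\big)=O(\kappa^2+\log t)$ and $\log M$ of lower order, so a crude rearrangement ($\sqrt{x+y}\le\sqrt x+\sqrt y$, $\kappa\ge\sqrt3$) collapses this into the claimed form $2\kappa^2R\sqrt{\log t}\,\sqrt{d\log t+\log(1/\delta)}$.

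The only genuine obstacle is the randomness of $\Vt$: a fixed-covariance Gaussian tail bound is unavailable, and the dyadic peeling over $\|u\|_{\Vt}^2$ is precisely what converts the conditional sub-Gaussianity packaged in Lemma~\ref{lem:basicineq} into a true self-normalized statement; the secondary nuisance, controlling the net error $\epsilon\|S_t\|$ without prior control on $\|S_t\|$, is dispatched by reabsorbing it through $\|S_t\|\le\sqrt{\lambda_{\max}(\Vt)}\|S_t\|_{\Vt^{-1}}$. The argument is deliberately lossy — the factors $\kappa^2$ and $\log t$ are artifacts of the covering and peeling — and integrating $P_t^\lambda$ against a Gaussian prior in $\lambda$ (the method of mixtures) removes them, which is the route taken for the sharper bounds developed next.
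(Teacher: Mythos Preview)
The paper does not actually prove this theorem; it merely states it as a result ``which can essentially be extracted from the paper by \citet{RusTsi10}'' and then moves on to strengthen it via the method of mixtures (Theorem~\ref{thm:detbound}). Your proposal reconstructs exactly the \citet{RusTsi10} covering argument the paper is pointing to --- Chernoff from Lemma~\ref{lem:basicineq}, dyadic peeling over the scale $\|u\|_{\Vt}^2$, an $\epsilon$-net on the sphere, and reabsorption of the discretization error via $\|S_t\|\le\sqrt{\lambda_{\max}(\Vt)}\,\|S_t\|_{\Vt^{-1}}$ --- so your approach is the intended one.

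The only soft spot is the last paragraph: the ``crude rearrangement'' that collapses $8R\sqrt{d\log(3/\epsilon)+\log M+\log(1/\delta)}$ into the specific form $2\kappa^2 R\sqrt{\log t}\,\sqrt{d\log t+\log(1/\delta)}$ is not automatic with the constants you have accumulated (for small $t$ the factor $8$ does not obviously fit under $2\kappa^2\sqrt{\log t}$ when $\kappa^2$ is near its minimum $3$). Recovering the stated constants requires tracking the bookkeeping of \citet{RusTsi10} more carefully, or simply accepting a universal constant in place of $2$. The structure of the argument is sound and the order of the bound is correct.
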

\iflong
We now show how to strengthen the previous result using the method of mixtures, originally used by \citet{RobSie70} to evaluate boundary crossing probabilities for Brownian motion.
\begin{thm}[Self-normalized bound for vector-valued martingales]
\label{thm:detbound}
Let $(\eta_t)$, $(m_t)$, $(S_t)$, $(\Vt)$, and $(\FF_t)$ be as before and let $\tau$ be a stopping time
 with respect to the filtration $(\FF_t)$.
Assume that $\Vinit$ is deterministic.
Then, for any $0<\delta<1$,   with probability $1-\delta$,
\beq\label{eq:keybound}
\newcommand{\keybound}{
2 R^2 \log\left( \frac{\det (\Vta{\tau})^{\sfrac12}\det(\Vinit)^{\sfrac{\kern-2pt-\kern-2pt 1}{2}}}{\delta} \right)
}
\normm{S_\tau}{\Vta{\tau}^{-1}}^2 
\le
\keybound\,.
\eeq
\end{thm}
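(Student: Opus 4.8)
The plan is to run the \emph{method of mixtures} on the exponential supermartingale $P_t^\lambda$ furnished by Lemma~\ref{lem:basicineq}. First I would rewrite $P_t^\lambda$ in terms of the quantities of interest: since $\sum_{k=1}^t \eta_k\ip{\lambda}{m_{k-1}} = \ip{\lambda}{S_t}$ and $\sum_{k=1}^t \ip{\lambda}{m_{k-1}}^2 = \normm{\lambda}{\Vtt}^2$, Lemma~\ref{lem:basicineq} says that
\[
P_t^\lambda = \exp\!\left(\frac{\ip{\lambda}{S_t}}{R} - \tfrac12\,\normm{\lambda}{\Vtt}^2\right)
\]
is a nonnegative supermartingale with $\EE{P_\tau^\lambda}\le 1$ for every fixed $\lambda\in\R^d$. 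I then average over $\lambda$ against the Gaussian measure $\nu = \mathcal{N}(0,\Vinit^{-1})$ (this is where determinism of $\Vinit$ is used, so that the mixing law is nonrandom), setting $M_t = \int_{\R^d} P_t^\lambda\, d\nu(\lambda)$.

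The computational heart of the argument is that $M_t$ has a closed form. Writing out the Gaussian density and collecting the quadratic terms, $-\tfrac12\lambda\ttop\Vtt\lambda - \tfrac12\lambda\ttop\Vinit\lambda = -\tfrac12\lambda\ttop\Vt\lambda$, and completing the square in $\lambda$ around $\tfrac1R\Vt^{-1}S_t$ (which is legitimate because $\Vt = \Vinit+\Vtt\succ0$), the integral evaluates to
\[
M_t = \left(\frac{\det\Vinit}{\det\Vt}\right)^{\!1/2}\exp\!\left(\frac{1}{2R^2}\,\normm{S_t}{\Vt^{-1}}^2\right).
\]
Next, because $P_t^\lambda\ge 0$, Tonelli's theorem lets me interchange the expectation and the $\lambda$-integral, so $\EE{M_\tau} = \int \EE{P_\tau^\lambda}\,d\nu(\lambda) \le 1$; in particular $M_\tau$ is almost surely well defined and finite (the delicate handling of the random time $\tau$ — the supermartingale-convergence and Fatou steps — has already been absorbed into Lemma~\ref{lem:basicineq}, applied pointwise in $\lambda$).

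To finish, apply Markov's inequality: $\Prob{M_\tau > 1/\delta}\le \delta\,\EE{M_\tau}\le \delta$, so with probability at least $1-\delta$,
\[
\left(\frac{\det\Vinit}{\det\Vta{\tau}}\right)^{\!1/2}\exp\!\left(\frac{1}{2R^2}\,\normm{S_\tau}{\Vta{\tau}^{-1}}^2\right) \le \frac1\delta,
\]
and taking logarithms and multiplying through by $2R^2$ gives exactly~\eqref{eq:keybound}. I do not expect a genuine obstacle here beyond bookkeeping: the one place requiring care is the Tonelli interchange together with transferring the mean bound $\EE{P_\tau^\lambda}\le1$ through the stopping time to the mixture $M_\tau$; the only real ``idea'' is the choice of the mixing covariance $\Vinit^{-1}$, which is precisely what makes the $\det$-ratio come out clean (any other positive definite prior covariance yields a valid but messier inequality, and the resulting interpretation of $\Vinit$ as a regularizer is what later makes the confidence sets usable).
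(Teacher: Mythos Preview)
Your proposal is correct and follows essentially the same route as the paper: mix the exponential supermartingale $P_t^\lambda$ against a centered Gaussian with covariance $\Vinit^{-1}$, compute the resulting Gaussian integral by completing the square to obtain the $\det$-ratio times $\exp\bigl(\tfrac{1}{2R^2}\normm{S_t}{\Vt^{-1}}^2\bigr)$, and then apply Markov's inequality to $M_\tau$. The only cosmetic differences are that the paper first normalizes to $R=1$ and phrases the mixture via an independent Gaussian random vector $\Lambda$ with $M_t=\EE{P_t^\Lambda\mid\FF_\infty}$ rather than invoking Tonelli explicitly; your direct completion of the square in $\Vt=\Vinit+\Vtt$ is in fact slightly cleaner than the paper's two-step version, which formally writes $\Vtt^{-1}$ at an intermediate stage even though $\Vtt$ need not be invertible.
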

\begin{proof}
Without loss of generality, assume that $R=1$ (by appropriately scaling $S_t$, this can always be achieved).
Let 
\begin{align*}
M_t(\lambda) 
 &= \exp\left( \, \ip{\lambda}{S_t} - \tfrac12 \,\normm{\lambda}{\Vtta{t}}^2 \,\right)\,.
\end{align*}
Notice that by Lemma~\ref{lem:basicineq}, the mean of $M_\tau(\lambda)$
is not larger than one.

Let $\Lambda$ be a Gaussian random variable which is independent of all the other random variables and whose covariance is $\Vinit^{-1}$.
Define
\[
M_t = \EE{ M_t(\Lambda) | \FF_\infty}.
\]
Clearly, we still have $\EE{M_\tau} =\EE{\,\EE{ \,M_\tau(\Lambda) \,| \,\Lambda\, }\,} \le 1$.

Let us calculate $M_t$:
Let $f$ denote the density of $\Lambda$ and for a positive definite matrix $P$ let
$c(P) = \sqrt{(2\pi)^d/\det (P)}=\int \exp(-\tfrac12 x\ttop P x ) dx $.
Then,
\begin{align*}
M_t 
  &= \int_{\real^d} 
  			\exp\left( \ip{\lambda}{S_t}-\tfrac12\,\normm{\lambda}{\Vtt}^2 \right)\, 
			f(\lambda) \,d\lambda \\
  &=  \int_{\real^d}
   			\exp\left( -\tfrac12\, \normm{\lambda - \Vtt^{-1} S_t}{\Vtt}^2 + \tfrac12\, \normm{ S_t}{\Vtt^{-1}}^2 \right)\,
   			 f(\lambda) \,d\lambda  \\
  &= \frac{1}{c(\Vinit)}\, \exp\left( \tfrac12\, \normm{ S_t}{\Vtt^{-1}}^2 \right)
        \,\int_{\real^d} 
   			\exp\left( -\tfrac12\, \left\{ \normm{\lambda - \Vtt^{-1} S_t}{\Vtt}^2 
						                         + \normm{\lambda}{\Vinit}^2\right\} \right) \,d\lambda.
\end{align*}
Elementary calculation shows that if $P\succeq 0$, $Q\succ 0$,
\[
\normm{x-a}{P}^2 + \normm{x}{Q}^2 =
\normm{ x - (P+Q)^{-1} Pa }{P+Q}^2 + \normm{a}{P}^2 - \normm{Pa}{(P+Q)^{-1}}^2.
\]
Therefore, 
\begin{align*}
\normm{\lambda - \Vtt^{-1} S_t}{\Vtt}^2 + \normm{\lambda}{\Vinit}^2
\quad & =\quad
  \normm{ \lambda - (\Vinit+\Vtt)^{-1}  S_t }{\Vinit+\Vtt}^2 
+ \normm{\Vtt^{-1} S_t}{\Vtt}^2 - \normm{S_t}{(\Vinit+\Vtt)^{-1}}^2\\
& =\quad 
  \normm{ \lambda - (\Vinit+\Vtt)^{-1}  S_t }{\Vinit+\Vtt}^2 
+ \normm{ S_t}{\Vtt^{-1}}^2 - \normm{S_t}{(\Vinit+\Vtt)^{-1}}^2,
\end{align*}
which gives
\begin{align*}
M_t 
  &= \frac{1}{c(\Vinit)}\, \exp\left( \tfrac12\, \normm{S_t}{(\Vinit+\Vtt)^{-1}}^2\right)
        \,\int_{\real^d} 
        \exp\left( -\tfrac12\,
			  \normm{ \lambda - (\Vinit+\Vtt)^{-1}  S_t }{\Vinit+\Vtt}^2  \right)
        \, d\lambda \\
  &= \frac{c(\Vinit+\Vtt)}{c(\Vinit)}
  		\, \exp\left( \tfrac12\, \normm{S_t}{(\Vinit+\Vtt)^{-1}}^2\right)
    = \left( \frac{\det (\Vinit)}{\det (\Vinit+\Vtt)} \right)^{1/2} 
    		\, \exp\left( \tfrac12\, \normm{S_t}{(\Vinit+\Vtt)^{-1}}^2\right)\,.
\end{align*}
Now, from $\EE{M_\tau}\le 1$, we obtain
\begin{align*}
\Prob{\normm{S_\tau}{(\Vinit+\Vtta{\tau})^{-1}}^2 
>
2 \log\left( \,\frac{\det (\Vinit+\Vtta{\tau})^{\sfrac12}}{\det(\Vinit)^{\sfrac12}} \, \,\frac{1}{\delta}\right)}
& =
\Prob{
\frac{\exp\left(\,\tfrac12\,\normm{S_\tau}{(\Vinit+\Vtta{\tau})^{-1}}^2 \,\right)}
	   {\delta^{-1} \left( \lower -.7ex\hbox{$\det (\Vinit+\Vtta{\tau})$} \Big/\lower .7ex\hbox{$\det(\Vinit)$}\right)^{\tfrac12} }
> 1 }
 \\
& \le 
\EE{\frac{\exp\left(\,\tfrac12\,\normm{S_\tau}{(\Vinit+\Vtta{\tau})^{-1}}^2 \,\right)}
	   {\delta^{-1} \left( \lower -.7ex\hbox{$\det (\Vinit+\Vtta{\tau})$} \Big/\lower .7ex\hbox{$\det(\Vinit)$}\right)^{\tfrac12} }
}\\
& = \EE{M_\tau}\delta \le \delta,
\end{align*}
thus finishing the proof.
\end{proof}

\begin{corollary}[Uniform Bound]
\label{cor:union}
Under the same assumptions as in the previous theorem,
for any $0<\delta<1$, with probability $1-\delta$, 
\begin{equation}
\forall t \ge 0, \qquad \normm{S_t}{\Vta{t}^{-1}}^2 
\le
2 R^2 \log\left( \frac{\det (\Vta{t})^{\sfrac12}\det(\Vinit)^{\sfrac{\kern-2pt-\kern-2pt 1}{2}}}{\delta} \right).
\end{equation}
\end{corollary}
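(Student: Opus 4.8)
The plan is to obtain the anytime statement from Theorem~\ref{thm:detbound} via the standard stopping-time construction. Introduce the ``bad event at time~$t$''
\[
B_t = \set{\,\normm{S_t}{\Vta{t}^{-1}}^2 > 2R^2 \log\!\left( \frac{\det (\Vta{t})^{\sfrac12}\det(\Vinit)^{\sfrac{\kern-2pt-\kern-2pt 1}{2}}}{\delta} \right)},
\]
so that the probability to be controlled is $\Prob{\bigcup_{t\ge 0} B_t}$. The first step is to check that $B_t\in\FF_t$: indeed $S_t=\sum_{k=1}^t \eta_k m_{k-1}$ is $\FF_t$-measurable since $(\eta_k)$ is adapted and each $m_{k-1}$ is $\FF_{k-1}$-measurable, while $\Vta{t}=\Vinit+\sum_{k=1}^t m_{k-1}m_{k-1}\ttop$ is in fact $\FF_{t-1}$-measurable (using that $\Vinit$ is deterministic), and $\Vta{t}\succ 0$ so the left-hand side is well-defined. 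Hence $\tau=\min\set{t\ge 0 : B_t \text{ holds}}$, with $\tau=+\infty$ if no such $t$ exists, is a stopping time with respect to $(\FF_t)$.

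The second step reduces to bounded stopping times, so that Theorem~\ref{thm:detbound} applies verbatim and no meaning needs to be assigned to $S_\tau$, $\Vta{\tau}$ on $\set{\tau=\infty}$. For each $n\ge 0$ put $\tau_n=\tau\wedge n$, a stopping time bounded by $n$. One then verifies the set identity $\set{\exists\, t\le n : B_t}=B_{\tau_n}$: if some $t\le n$ lies in $B_t$ then $\tau\le n$, so $\tau_n=\tau$ and $B_{\tau_n}=B_\tau$ occurs by definition of $\tau$; conversely, if no $t\le n$ lies in $B_t$ then $\tau_n=n$ and $B_{\tau_n}=B_n$ fails. Applying Theorem~\ref{thm:detbound} with the stopping time $\tau_n$ therefore gives
\[
\Prob{\exists\, t\le n : B_t} = \Prob{B_{\tau_n}} = \Prob{\normm{S_{\tau_n}}{\Vta{\tau_n}^{-1}}^2 > 2R^2 \log\!\left( \frac{\det (\Vta{\tau_n})^{\sfrac12}\det(\Vinit)^{\sfrac{\kern-2pt-\kern-2pt 1}{2}}}{\delta} \right)} \le \delta .
\]

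The final step is a monotone limit: the events $\set{\exists\, t\le n : B_t}$ increase to $\bigcup_{t\ge 0} B_t$ as $n\to\infty$, so by continuity of probability from below $\Prob{\bigcup_{t\ge 0} B_t}=\lim_{n\to\infty}\Prob{\exists\, t\le n : B_t}\le\delta$, which is the claim. I do not anticipate a genuine obstacle here; the only points that require care are the $\FF_t$-measurability of $B_t$ (so that $\tau$ is really a stopping time) and the truncation $\tau_n=\tau\wedge n$ that sidesteps the event $\set{\tau=\infty}$ when invoking Theorem~\ref{thm:detbound}.
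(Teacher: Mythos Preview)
Your proof is correct and follows essentially the same stopping-time construction as the paper. The only difference is cosmetic: the paper applies Theorem~\ref{thm:detbound} directly to the (possibly infinite) hitting time $\tau$ and then restricts to $\{\tau<\infty\}$, whereas you truncate to $\tau_n=\tau\wedge n$ and pass to the limit, which neatly avoids having to interpret $S_\tau$ and $\Vta{\tau}$ on $\{\tau=\infty\}$.
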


\begin{proof}
We will use a stopping time construction, which goes back at least to \cite{Fre75}. Define the bad event
\begin{equation}
\label{eq:Eq1}
B_t(\delta) = \left\{ \omega \in \Omega ~:~ 
\norm{S_t}_{\bar{V}_t^{-1}}^2 > 2R^2 \log \left( \frac{\det(\bar{V}_t)^{1/2} \det(V)^{-1/2}}{\delta} \right)
\right\}
\end{equation}
We are interested in bounding the probability that $\bigcup_{t \ge 0} B_t(\delta)$ happens. 
Define $\tau(\omega)=\min\{ t \geq 0 ~:~ \omega\in B_t(\delta) \}$, with the convention that $\min \emptyset=\infty$. Then, $\tau$ is a stopping time. Further, 
$$\bigcup_{t \ge 0} B_t(\delta)=\{ \omega ~:~ \tau(\omega) < \infty\}.$$
Thus, by Theorem~\ref{thm:detbound}
\begin{align*}
\Prob{\bigcup_{t \geq 0} B_t(\delta)} &= \Prob{\tau < \infty}\\
&= \Prob{\norm{S_{\tau}}_{\bar{V}_{\tau}^{-1}}^2 > 2R^2 \log \left( \frac{\det(\bar{V}_{\tau})^{1/2} \det(V)^{-1/2}}{\delta} \right),\, \tau<\infty} \\
&\leq \Prob{\norm{S_{\tau}}_{\bar{V}_{\tau}^{-1}}^2 > 2R^2 \log \left( \frac{\det(\bar{V}_{\tau})^{1/2} \det(V)^{-1/2}}{\delta} \right)}\\
&\leq \delta \; .
\end{align*}
\end{proof}

Let us now turn our attention to understanding the determinant term on the right-hand side of~\eqref{eq:keybound}.

\begin{lem}
\label{lem:proj}
We have that
$$\log \frac{\det( \Vt )}{\det V}  \le  \sum_{k=1}^t \normm{m_{k-1}}{\Vta{k-1}^{-1}}^2.$$
Further, we have that
\begin{align*}
 \sum_{k=1}^t \left(\normm{m_{k-1}}{\Vta{k-1}^{-1}}^2 \wedge 1\right) &\le 2(\log \det(\Vt) - \log \det \Vinit) \le 2(d \log((\trace(V)+t L^2)/d) - \log \det \Vinit).
\end{align*}
Finally, if $\lambda_0 \ge \max(1,L^2)$ then 
$$\sum_{k=1}^t \normm{m_{k-1}}{\Vta{k-1}^{-1}}^2 \le 
2 \log \frac{\det( \Vt )}{\det(V)}.$$
\end{lem}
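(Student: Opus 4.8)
The plan is to exploit the rank-one structure of the update $\Vta{k} = \Vta{k-1} + m_{k-1} m_{k-1}\ttop$. Writing $\Vta{k} = \Vta{k-1}^{1/2}\bigl(I + w_k w_k\ttop\bigr)\Vta{k-1}^{1/2}$ with $w_k = \Vta{k-1}^{-1/2} m_{k-1}$ and using $\det(I + w w\ttop) = 1 + \norm{w}^2$, I get the exact identity
\[
\det(\Vta{k}) = \det(\Vta{k-1})\,\bigl(1 + \normm{m_{k-1}}{\Vta{k-1}^{-1}}^2\bigr),
\]
which telescopes to $\det(\Vt)/\det(\Vinit) = \prod_{k=1}^t \bigl(1 + \normm{m_{k-1}}{\Vta{k-1}^{-1}}^2\bigr)$. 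Taking logarithms and using $\log(1+x)\le x$ gives the first inequality immediately.

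For the truncated sum, the only scalar fact I need is that $x \le 2\log(1+x)$ on $[0,1]$: the function $2\log(1+x)-x$ vanishes at $0$ and has derivative $2/(1+x)-1 \ge 0$ on that interval, so it is nonnegative there. Combined with monotonicity of $\log(1+\cdot)$, this yields $u\wedge 1 \le 2\log(1+u\wedge 1) \le 2\log(1+u)$ for every $u\ge 0$. Applying this with $u = \normm{m_{k-1}}{\Vta{k-1}^{-1}}^2$ and summing over $k$, the telescoped product turns the right-hand side into $2\bigl(\log\det(\Vt)-\log\det(\Vinit)\bigr)$, which is the first bound in the second display. The further bound $\log\det(\Vt) \le d\log\bigl((\trace(\Vinit)+tL^2)/d\bigr)$ is AM--GM on the eigenvalues of $\Vt$: $\det(\Vt) = \prod_i\lambda_i \le (\trace(\Vt)/d)^d$, and $\trace(\Vt) = \trace(\Vinit) + \sum_{k=1}^t\norm{m_{k-1}}^2 \le \trace(\Vinit) + tL^2$ since $\norm{m_k}\le L$.

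For the final claim, when $\lambda_0 \ge \max(1,L^2)$ we have $\lambda_{\min}(\Vta{k-1}) \ge \lambda_{\min}(\Vinit) \ge \lambda_0$, because adding a positive semidefinite matrix cannot decrease eigenvalues; hence $\normm{m_{k-1}}{\Vta{k-1}^{-1}}^2 \le \norm{m_{k-1}}^2/\lambda_0 \le L^2/\lambda_0 \le 1$, so every truncation at $1$ is inactive and $\sum_{k=1}^t\normm{m_{k-1}}{\Vta{k-1}^{-1}}^2 = \sum_{k=1}^t\bigl(\normm{m_{k-1}}{\Vta{k-1}^{-1}}^2\wedge 1\bigr)$, to which the truncated-sum bound applies. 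There is no real obstacle here: the argument is elementary linear algebra plus a single one-variable inequality, and the only points meriting a moment's care are verifying $x\le 2\log(1+x)$ on $[0,1]$ and checking that the eigenvalue lower bound genuinely renders the truncation vacuous in the last part.
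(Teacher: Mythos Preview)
Your proof is correct and follows essentially the same route as the paper's own argument: the rank-one determinant identity $\det(\Vta{k}) = \det(\Vta{k-1})(1+\normm{m_{k-1}}{\Vta{k-1}^{-1}}^2)$, telescoping, the scalar bounds $\log(1+x)\le x$ and $x\le 2\log(1+x)$ on $[0,1]$, AM--GM on eigenvalues for the trace bound, and the observation that $\lambda_0\ge\max(1,L^2)$ forces each $\normm{m_{k-1}}{\Vta{k-1}^{-1}}^2\le 1$. Your writeup is in fact slightly more explicit than the paper's in justifying $x\le 2\log(1+x)$ and in handling the truncation via $u\wedge 1 \le 2\log(1+u)$, but there is no substantive difference in method.
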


\begin{proof}
Elementary algebra gives
\begin{align}
\det(\Vt) &= \det(\Vta{t-1} + m_{t-1}m_{t-1}\ttop ) = \det(\Vta{t-1}) \det( I+ \Vta{t-1}^{-\sfrac12} m_{t-1} (\Vta{t-1}^{-\sfrac12} m_{t-1})\ttop ) \nonumber \\
& = \det(\Vta{t-1})\, (1+\normm{m_{t-1}}{\Vta{t-1}^{-1}}^2 ) = \det(\Vinit) \prod_{k=1}^t\left(1+\normm{m_{k-1}}{\Vta{k-1}^{-1}}^2\right), \label{eq:detvt}
\end{align}
where we used that all the eigenvalues of a matrix of the form $I+xx\ttop$ are one except one eigenvalue, which is $1+\norm{x}^2$ and which corresponds to the eigenvector $x$. Using $\log(1+t)\le t$, we can bound $\log\det(\Vt)$ by
\[
\log\det(\Vt) \le \log\det V + \sum_{k=1}^t \normm{m_{k-1}}{\Vta{k-1}^{-1}}^2.
\]
Combining $x\le 2\log(1+x)$, which holds when $x\in[0,1]$, and~\eqref{eq:detvt}, we get
\begin{align*}
 \sum_{k=1}^t \left(\normm{m_{k-1}}{\Vta{k-1}^{-1}}^2 \wedge 1\right)
  \le 2  \sum_{k=1}^t \log \left( 1+ \normm{m_{k-1}}{\Vta{k-1}^{-1}}^2 \right)
  = 2(\log \det(\Vt) - \log \det \Vinit).
\end{align*}
The trace of $\Vt$ is bounded by $\trace(V)+ t L^2$, assuming $\|m_k\| \le L$.
Hence, 
 $\det(\Vt) = \prod_{i=1}^d \lambda_i \le \left( \frac{\trace(V)+t L^2}{d} \right)^d$ and therefore,
\[
\log \det(\Vt) \le d \log((\trace(V)+t L^2)/d),
\]
finishing the proof of the second inequality. The sum $ \sum_{k=1}^t \normm{m_{k-1}}{\Vta{k-1}^{-1}}^2$ can itself be upper bounded as a function of $\log \det(\Vt)$ provided that $\lambda_0$ is large enough.
Notice $\normm{m_{k-1}}{\Vta{k-1}^{-1}}^2 \le \lambda_{\min}^{-1}(\Vta{k-1}) \norm{m_{k-1}}^2 \le L^2/\lambda_0$. Hence, we get that if $\lambda_0 \ge \max(1,L^2)$,
$$\log \frac{\det( \Vt )}{\det V}  \le  \sum_{k=1}^t \normm{m_{k-1}}{\Vta{k-1}^{-1}}^2 \le 
2 \log \frac{\det( \Vt )}{\det(V)}.$$
\end{proof}

Most of this argument can be extracted from the paper of~\cite{DaniStoch}.
However, the idea goes back at least to \citet{LaiRobWei79,LaiWei82} (a similar argument is used around Theorem~11.7 in the book by \citet{CBLu06:book}).
Note that Lemmas B.9--B.11 of \citet{RusTsi10} also give a bound on  $\sum_{k=1}^t \normm{m_{k-1}}{\Vta{k-1}^{-1}}^2$, with an essentially identical argument. 
Alternatively, one can use the bounding technique of~\citet{Auer02:JMLR} (see the proof of Lemma~13 there on pages 412--413) to derive a bound like $\sum_{k=1}^t \normm{m_{k-1}}{\Vta{k-1}^{-1}}^2 \le C d \log t$ for a suitable chosen constant $C>0$.

\begin{rem}
By combining Corollary~\ref{cor:union} and Lemma~\ref{lem:proj}, we get a simple worst case bound that holds with probability $1-\delta$:
\beq
\label{eq:worstcase}
\forall t\geq 0,\quad \normm{S_t}{\Vt^{-1}}^2 
\le
 d\, R^2 \log\left( \frac{\trace(V)+tL^2}{d\,\delta}\right).
\eeq
Still, the new bound is considerably better than the previous one given by Theorem~\ref{thm:vvmartingaletail2}. 
Note that the $\log(t)$ factor cannot be removed, as shown by Problem 3, page 203 in the book by \citet{delaPenaLaiShao:09}. 
\end{rem}

\section{Optional Skipping}

Consider the case when $d=1$, $m_{k}=\eps_k\in \{0,1\}$, i.e., the case of an optional skipping process. Then, using again $\Vinit=I=1$, $\Vt =  1+\sum_{k=1}^t \eps_{k-1}\eqdef 1+N_t$ and thus
the expression studied becomes
$$
\normm{S_t}{\Vt^{-1}} = \frac{|\sum_{k=1}^t \eps_{k-1}\eta_k |}{\sqrt{1+N_t}}.
$$
We also have 
$$
\log \det(\Vt) = \sum_{k=1}^t \log\left( 1+ \frac{\eps_{k-1}}{1+N_k} \right) \le \sum_{k=1}^t \frac{\eps_{k-1}}{1+N_k} 
=\sum_{k=1}^{N_t+1} \frac{1}{k} \le 1+ \int_1^{N_t+1} x^{-1} \,dx = 1+\log(1+N_t).
$$
Thus, we get, with probability $1-\delta$
\begin{equation}
\label{eq:confIntr}
\forall s \geq 0, \quad
\left|\sum_{k=1}^s \eps_{k-1} \eta_k \right|
\le \sqrt{ (1+N_s)\,  \left(1+2\log\left(\frac{(1+N_s)^{\sfrac{1}{2}}}{\delta}\right)\right) }\,.
\end{equation}
If we apply Doob's optional skipping and Hoeffding-Azuma, with a union bound (see, e.g., the paper of \citet{BuMuStSz08:NIPS}), we would get,
for any $0<\delta<1$, $t\ge 2$, with probability $1-\delta$,
\begin{equation}
\label{eq:optskipunionbound}
\forall s \in \{0,\dots,t\}, \qquad
\left|\sum_{k=1}^s \eps_{k-1} \eta_k \right|
\le \sqrt{2N_s\, \log\left(\frac{2t}{\delta}\right)}.
\end{equation}
The major difference between these bounds is that~\eqref{eq:optskipunionbound}
depends explicitly on $t$, while~\eqref{eq:confIntr} does not. This has the
positive effect that one need not recompute the bound if $N_t$ does not grow,
which helps e.g. in the paper of \cite{BuMuStSz08:NIPS} to improve the
computational complexity of the HOO algorithm. Also, the coefficient of the
leading term in~\eqref{eq:confIntr} under the square root is $1$, whereas
in~\eqref{eq:optskipunionbound} it is $2$.

Instead of a union bound, it is possible to use a ``peeling device'' to replace the conservative $\log t$ factor in the above bound by essentially $\log \log t$. This is done e.g. in \citet{Garivier:2008p816} in their Theorem~22.\footnote{They give their theorem as ratios, which they should not, since their inequality then fails to hold for $N_t=0$. However, this is easy to remedy by reformulating their result as we do it here.}
From their derivations, the following one sided, uniform bound can be extracted (see
Remark~24, page~19): For any $0<\delta<1$, $t\ge 2$, with probability $1-\delta$,
\begin{equation}
\label{eq:uniform}
\forall s \in \{0,\ldots,t\}, \qquad
\sum_{k=1}^s \eps_{k-1} \eta_k \le \sqrt{ \frac{4 \,N_s}{1.99} \, \log\left( \frac{6 \log t}{\delta}  \right)}.
\end{equation}
As noted by \citet{Garivier:2008p816}, due to the law of iterated logarithm, the scaling of the right-hand side as a function of $t$ cannot be improved in the worst-case. However, this leaves open the possibility of deriving a maximal inequality which depends on $t$ only through $N_t$.

\section{The Multi-Armed Bandit Problem}
\label{sec:finiteBandit}

Now we turn our attention to the multi-armed bandit problem. Let $\mu_i$ denote
the expected reward of action $i$ and $\Delta_i=\mu_*-\mu_i$, where $\mu_*$ is
the expected reward of the optimal action. We assume that if we choose action $I_t$
in round $t$, we obtain reward $\mu_{I_t} + \eta_t$.  
Let $N_{i,t}$ denote the number of times that
we have played action $i$ up to time $t$, and $\bar{X}_{i,t}$ denote the
average of the rewards received by action $i$ up to time $t$.
From~\eqref{eq:confIntr} with $\delta/K$ instead of $\delta$ and a union bound over the actions, 
we have the following confidence intervals that hold
with probability at least $1-\delta$:
\begin{equation}
\label{eq:confBndFinite}
\forall i\in\{1,\dots,K\}, \  \forall s\in\{1,2,\dots\}, \qquad \abs{\bar{X}_{i,s}-\mu_i} \le c_{i,s} \; ,
\end{equation}
where
$$c_{i,s}=\sqrt{ \frac{1+N_{i,s}}{N_{i,s}^2}\,  \left(1+2\log\left(\frac{K(1+N_{i,s})^{\sfrac{1}{2}}}{\delta}\right)\right) }.$$
Modify the UCB Algorithm of~\citet{AuerBandit} to use the confidence intervals~\eqref{eq:confBndFinite} and change the action selection rule accordingly. Hence, at time $t$, we choose the action
\begin{equation}
\label{eq:actSelFinite}
I_t=\argmax_i \, \bar{X}_{i,t} + c_{i,t}.
\end{equation}
We call this algorithm UCB($\delta$).

\begin{thm}
\label{thm:UCBImp}
With probability at least $1-\delta$, the total regret of the UCB($\delta$) algorithm with the action selection rule~\eqref{eq:actSelFinite} is constant and is bounded by 
$$R(T) \leq \sum_{i: \Delta_i >0} \left( 3 \Delta_i + \frac{16}{\Delta_i} \log \frac{2K}{\Delta_i \delta} \right).$$
where $i_*$ is the index of the optimal action. 
\end{thm}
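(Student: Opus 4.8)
The plan is to follow the classical UCB regret argument of \citet{AuerBandit}, but with the confidence intervals~\eqref{eq:confBndFinite} in place of Hoeffding's inequality. First I would condition on the favorable event $E$ on which the bounds~\eqref{eq:confBndFinite} hold simultaneously for all arms $i$ and all sample counts $s$; by construction $\Prob{E}\ge 1-\delta$, and the entire remaining argument is deterministic on $E$. The whole point of using~\eqref{eq:confBndFinite} rather than a Hoeffding-plus-union-bound argument is that the confidence width $c_{i,s}$ is valid uniformly in $s$ without an extra $\log T$ factor, which is what converts the usual $O(\log T/\Delta_i)$ bound into a $T$-independent (``constant'') bound.

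Next I would carry out the standard counting step. Fix a suboptimal arm $i$ (so $\Delta_i>0$) and suppose it is selected at some round $t$ when it has already been played $N_{i,t-1}=s$ times. The selection rule~\eqref{eq:actSelFinite} forces $\bar X_{i,t}+c_{i,t}\ge \bar X_{*,t}+c_{*,t}$. On $E$ we have $\bar X_{*,t}+c_{*,t}\ge \mu_*$ and $\bar X_{i,t}+c_{i,t}\le \mu_i+2c_{i,s}$, so $\mu_*\le \mu_i+2c_{i,s}$, i.e. $\Delta_i\le 2c_{i,s}$. Now I would plug in $c_{i,s}^2=\frac{1+s}{s^2}\bigl(1+2\log(K(1+s)^{1/2}/\delta)\bigr)$, use the crude bound $\frac{1+s}{s^2}\le \frac{2}{s}$ for $s\ge1$, and also bound the slowly-growing logarithmic term. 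The inequality $\Delta_i^2\le 4c_{i,s}^2$ then reads, roughly, $\Delta_i^2 s \le 8\bigl(1+2\log(K(1+s)^{1/2}/\delta)\bigr)$, and solving this for $s$ gives $N_{i,t-1}\le u_i$ for an explicit threshold $u_i$ of order $\frac{1}{\Delta_i^2}\log\frac{K}{\Delta_i\delta}$. Consequently, on $E$, arm $i$ is played at most $\lceil u_i\rceil$ times total over the whole (infinite) horizon, which already yields the claimed constant-in-$T$ behavior.

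Finally I would assemble the regret bound: on $E$, $R(T)=\sum_{i:\Delta_i>0}\Delta_i N_{i,T}\le \sum_{i:\Delta_i>0}\Delta_i(u_i+1)$, and then simplify $u_i$ to match the stated constants $3\Delta_i+\frac{16}{\Delta_i}\log\frac{2K}{\Delta_i\delta}$. Getting exactly these constants is a matter of being careful with the elementary inequalities: bounding $\frac{1+s}{s^2}$, absorbing the ``$+1$'' and the $1.99$-type slack, handling the implicit inequality $\Delta_i^2 s\lesssim \log s$ by the standard trick (if $s\ge \frac{a}{\Delta_i^2}\log\frac{b}{\Delta_i^2}$ then $\Delta_i^2 s\ge \log(\cdot)+\dots$), and rounding up to an integer count. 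The main obstacle I anticipate is precisely this bookkeeping: the confidence width here has the mildly awkward $\sqrt{(1+N)/N^2}$ prefactor and a $\log\sqrt{1+N}$ inside the logarithm (rather than the cleaner $\sqrt{\log t / N}$ of vanilla UCB), so some care is needed to show the self-referential bound on $N_{i,t}$ closes and to extract the clean constants $3$ and $16$; the probabilistic content, by contrast, is entirely front-loaded into conditioning on $E$ via~\eqref{eq:confIntr}.
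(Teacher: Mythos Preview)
Your proposal is correct and is essentially the same argument as the paper's: condition on the event that~\eqref{eq:confBndFinite} holds, derive $c_{i,s}\ge \Delta_i/2$ whenever a suboptimal arm $i$ is pulled, square and invert to get an implicit bound on $N_{i,s}$, solve that implicit bound, and sum $\Delta_i N_{i,T}$.

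Two small points of comparison. First, where you use the crude bound $(1+s)/s^2\le 2/s$, the paper instead rearranges to $\dfrac{N_{i,s}^2-1}{N_{i,s}+1}=N_{i,s}-1\le \dfrac{N_{i,s}^2}{N_{i,s}+1}\le \dfrac{4}{\Delta_i^2}\bigl(1+2\log\tfrac{K(1+N_{i,s})^{1/2}}{\delta}\bigr)$, which is slightly sharper and is what lets the stated constants $3$ and $16$ fall out cleanly. Second, the ``standard trick'' you allude to for closing the self-referential inequality $N\lesssim \log N$ is exactly what the paper does by invoking Lemma~8 of \citet{AntosActive}; that lemma is the off-the-shelf tool that turns the implicit bound into the explicit $N_{i,s}\le 3+\tfrac{16}{\Delta_i^2}\log\tfrac{2K}{\Delta_i\delta}$, so you may want to cite it rather than redo the algebra.
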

\begin{proof}
Suppose the confidence intervals do not fail. If we play action $i$, the upper estimate of the action is above $\mu^*$. Hence,
$$c_{i,s} \geq \frac{\Delta_i}{2}.$$
Substituting $c_{i,s}$ and squaring gives
$$\frac{N_{i,s}^2-1}{N_{i,s}+1}\leq \frac{N_{i,s}^2}{N_{i,s}+1}\leq \frac{4}{\Delta_i^2}\left( 1 + 2 \log \frac{K (1+N_{i,s})^{1/2}}{\delta} \right).$$
By using Lemma 8 of~\citet{AntosActive}, we get that
$$N_{i,s} \leq 3 + \frac{16}{\Delta_i^2} \log \frac{2K}{\Delta_i \delta}.$$
Thus, using $R(T)=\sum_{i\neq i_*} \Delta_i N_{i,T}$, we get that with probability at least $1-\delta$, the total regret is bounded by 
$$R(T) \leq \sum_{i: \Delta_i >0} \left( 3 \Delta_i + \frac{16}{\Delta_i} \log \frac{2K}{\Delta_i \delta} \right).$$
\end{proof}
\begin{remark}
\citet{LaiRo85} prove that for any suboptimal arm $j$, 
$$\EE{N_{i,t}}\geq \frac{\log t}{D(p_j,p_*)},$$
where, $p_*$ and $p_j$ are the reward density of the optimal arm and arm $j$ respectively, and $D$ is the KD-divergence. This lower bound does not contradict Theorem~\ref{thm:UCBImp}, as Theorem~\ref{thm:UCBImp} only states a high probability upper bound for the regret. Note that UCB($\delta$) takes delta as its input. Because with probability $\delta$, the regret in time $t$ can be $t$, on expectation, the algorithm might have a regret of $t \delta$.
Now if we select $\delta = 1/t$, then we get $O(\log t)$ upper bound on the expected regret.
\end{remark}


\section{Application to Least Squares Estimation and Linear Bandit Problem}
\label{sec:linBandit}

In this section we first apply Theorem~\ref{thm:detbound} to derive confidence intervals for least-squares estimation, where the covariate process is an arbitrary process and then use these confidence intervals to improve the regret bound of~\citet{DaniStoch} for the linear bandit problem. In particular, our assumption on the data is as follows:
\begin{ass}\label{ass:linearresponse}
Let $(\FF_i)$ be a filtration,
 $(x_1,y_1)$, $\ldots$, $(x_{t},y_t)$ be a sequence of random variables 
 over $\real^d\times \real$ such that 
$x_i$ is $\FF_i$-measurable, 
and $y_i$ is $\FF_{i+1}$-measurable $(i=1,2,\ldots)$.
Assume that there exists $\theta_*\in \real^d$ such that $\EE{y_i|\FF_i} = x_i\ttop \theta_*$, i.e.,
 $\eps_i = y_i - x_i\ttop \theta_*$ is a martingale difference sequence
 ($\EE{\eps_i| \FF_i} = 0$, $i=1,2,\ldots$) 
 and that $\eps_i$ is sub-Gaussian: There exists $R>0$ such that for any $\gamma \in \real$,
\[
\EE{ \exp( \gamma \eps_i ) | \FF_{i-1} } \le \exp( \gamma^2 R^2/2).
\]
\end{ass}
We shall call the random variables $x_i$ covariates and the random variables $y_i$ the responses.
Note that the assumption allows any sequential generation of the covariates.

Let $\hth_t$ be the $\ell^2$-regularized least-squares estimate of $\theta_*$ with regularization parameter $\lambda>0$:
\beq\label{eq:lsest}
\hth_t = (X\ttop X + \lambda I )^{-1} X\ttop Y, \qquad \hth_0 = 0,
\eeq
where $X$ is the matrix whose rows are $x_1^\top,\ldots,x_{t-1}^\top$ and $Y = (y_1,\ldots,y_{t-1})^\top$.
We further let $\beps = (\eps_1,\ldots,\eps_{t-1})^\top$.

We are interested in deriving a confidence bound on the error of predicting the mean response $x\ttop \theta_*$ at an arbitrarily chosen random covariate $x$ using the least-squares predictor $x\ttop \hth_t$.
Using 
\beqan
\hth_t 
 &=& (X\ttop X + \lambda I )^{-1} X \ttop (X \theta_* + \beps) \\
 &=& (X\ttop X + \lambda I )^{-1}X\ttop  \beps
 		 +  (X\ttop X + \lambda I )^{-1} (X \ttop X+\lambda I) \theta_*
       - \lambda (X\ttop X + \lambda I )^{-1}  \theta_*\\
 &=& (X\ttop X + \lambda I )^{-1} X\ttop \beps + \theta_* 
       - \lambda (X\ttop X + \lambda I )^{-1}  \theta_*\,,
\eeqan
we get
\begin{align*}
x\ttop \hth_t - x\ttop \theta_* 
  & = x\ttop (X\ttop X + \lambda I )^{-1} X\ttop  \beps
  - \lambda x\ttop (X\ttop X + \lambda I )^{-1}  \theta_* \\
  & = \ip{x}{X\ttop  \beps}_{V_t^{-1}} - \lambda \ip{x}{\theta_*}_{V_t^{-1}},
\end{align*}
where $V_t = X\ttop X + \lambda I $.
Note that $V_t$ is positive definite (thanks to $\lambda>0$) and hence so is $V_t^{-1}$, so the above inner product is well-defined. Using the Cauchy-Schwartz inequality, we get
\beqan
|x\ttop \hth_t - x\ttop \theta_*|
 &\le
  \normm{x}{V_t^{-1}} \left( \normm{ X\ttop  \beps }{V_t^{-1}} + \lambda \, \normm{\theta_*}{V_t^{-1}} \right) \\
 &\le
  \normm{x}{V_t^{-1}} \left( \normm{ X\ttop  \beps }{V_t^{-1}} + \lambda^{1/2}\,\norm{\theta_*} \right),
\eeqan
where we used that $\normm{\theta_*}{V_t^{-1}}^2 \le 1/\lambda_{\min}(V_t) \norm{\theta_*}^2 \le 1/\lambda \norm{\theta_*}^2$.
Fix any $0<\delta<1$. By Corollary~\ref{cor:union},
 with probability at least $1-\delta$,
 \newcommand{\keyboundappliedp}[1]{
 R \sqrt{2 \log\left( \frac{\det (V_{#1})^{\sfrac12}\det(\lambda I)^{\sfrac{\kern-2pt-\kern-2pt 1}{2}}}{\delta} \right)}
}
\newcommand{\keyboundapplied}{\keyboundappliedp{t}}
 \newcommand{\keyboundappliedworstp}[1]{
 R \sqrt{d \log\left( \frac{1+\frac{#1 L}{\lambda}}{\delta} \right)}
}
 \newcommand{\keyboundappliedworst}{\keyboundappliedworstp{t}}
\[
\forall t\geq 1, \quad \normm{ X\ttop  \beps }{V_t^{-1}} \le \keyboundapplied.
\]
Therefore, on the event where this inequality holds, one also has
\[
|x\ttop \hth_t - x\ttop \theta_*|
 \le
  \normm{x}{V_t^{-1}} \left( \keyboundapplied + \lambda^{1/2}\,\norm{\theta_*}\right).
\]
Similarly, we can derive a worst-case bound.
The result is summarized in the following statement:
\begin{thm}\label{eq:lsthm}
Let $(x_1,y_1),\ldots,(x_{t-1},y_{t-1})$, $x_i\in \real^d$, $y_i\in \real$ satisfy the linear model Assumption~\ref{ass:linearresponse} with some $R>0$, $\theta_*\in \real^d$ and let  $(\FF_t)$ be the associated filtration.
Assume that w.p.1 the covariates satisfy $\norm{x_i}\le L$, $i=1,\ldots,n$
and $\norm{\theta_*}\le S$.
Consider the $\ell^2$-regularized least-squares parameter estimate $\hth_n$ with regularization coefficient $\lambda>0$ (cf.~\eqref{eq:lsest}). 
Let $x$ be an arbitary, $\real^d$-valued random variable.
Let $V_t= \lambda I + \sum_{i=1}^{t-1} x_i x_i\ttop$ be the regularized design matrix underlying the covariates.
Then, for any $0<\delta< 1$, with probability at least $1-\delta$,
\beq\label{eq:lsbound}
\forall t\geq 1, \quad |x\ttop \hth_{t} - x\ttop \theta_*|
 \le
  \normm{x}{V_{t}^{-1}} \left( \keyboundappliedp{t} + \lambda^{1/2}\, S\right).
\eeq
Similarly, with probability $1-\delta$,
\beq
\forall t\geq 1,\quad |x\ttop \hth_{t} - x\ttop \theta_*|
\le
  \normm{x}{V_{t}^{-1}} \left( \keyboundappliedworstp{t}+ \lambda^{1/2}\, S\right).
\eeq
\end{thm}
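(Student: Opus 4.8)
The plan is to derive the theorem as an essentially immediate consequence of Corollary~\ref{cor:union}, the remaining work being a Cauchy--Schwarz bound and a crude estimate of the regularization bias. I would start from the decomposition already obtained above from the normal equations~\eqref{eq:lsest}: with $V_t = X\ttop X + \lambda I$,
\[
\hth_t - \theta_* = V_t^{-1} X\ttop \beps \;-\; \lambda\, V_t^{-1}\theta_*,
\]
so that, by definition of the weighted inner product, $x\ttop\hth_t - x\ttop\theta_* = \ip{x}{X\ttop\beps}_{V_t^{-1}} - \lambda\,\ip{x}{\theta_*}_{V_t^{-1}}$. Since $\lambda>0$ makes $V_t^{-1}\succ0$, Cauchy--Schwarz in $\ip{\cdot}{\cdot}_{V_t^{-1}}$ gives
\[
|x\ttop\hth_t - x\ttop\theta_*| \;\le\; \normm{x}{V_t^{-1}}\left(\normm{X\ttop\beps}{V_t^{-1}} + \lambda\,\normm{\theta_*}{V_t^{-1}}\right),
\]
and the bias term is dispatched by $\lambda\,\normm{\theta_*}{V_t^{-1}} \le \lambda\,\lambda_{\min}(V_t)^{-1/2}\norm{\theta_*} \le \lambda^{1/2} S$, using $\lambda_{\min}(V_t)\ge\lambda$ and $\norm{\theta_*}\le S$. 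Everything thus reduces to controlling $\normm{X\ttop\beps}{V_t^{-1}}$.

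For that term I would invoke Corollary~\ref{cor:union} after matching up the filtration conventions, taking $\eta_k = \eps_k$, $m_{k-1} = x_k$, and $V = \lambda I$, which is deterministic and positive definite (so $\lambda_0 = \lambda$); the conditional sub-Gaussianity assumed in Assumption~\ref{ass:linearresponse} is exactly~\eqref{eq:subgauss}, and $\norm{x_i}\le L$ provides the required bound on $\norm{m_k}$. Then $S_{t-1} = \sum_{i=1}^{t-1} x_i\eps_i = X\ttop\beps$ and the matrix $\Vta{t-1}$ of Corollary~\ref{cor:union} equals $\lambda I + \sum_{i=1}^{t-1}x_i x_i\ttop = V_t$, so the corollary yields: with probability $1-\delta$, simultaneously for all $t\ge1$,
\[
\normm{X\ttop\beps}{V_t^{-1}}^2 \;\le\; 2R^2\log\!\left(\frac{\det(V_t)^{1/2}\det(\lambda I)^{-1/2}}{\delta}\right).
\]
Taking square roots and substituting into the displays above gives~\eqref{eq:lsbound} on that same event; the uniformity over $t\ge1$ is inherited for free from Corollary~\ref{cor:union}, so no additional union bound over time is needed. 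For the worst-case statement I would further bound the log-determinant via Lemma~\ref{lem:proj}, namely $\log\det(V_t)\le d\log\big((d\lambda + tL^2)/d\big)$, whence $\det(V_t)^{1/2}\det(\lambda I)^{-1/2}\le (1+tL^2/(d\lambda))^{d/2}$, and then simplify the resulting logarithm to the stated form.

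I do not expect a genuine obstacle, since the analytic content is already packaged in Corollary~\ref{cor:union} (and Lemma~\ref{lem:proj} for the worst-case version). The points that need care are purely bookkeeping: (i) aligning the index/filtration conventions of Assumption~\ref{ass:linearresponse} with those of Section~2 so that $X\ttop\beps$ is literally the self-normalized martingale $S_{t-1}$ and $V_t$ is literally $\lambda I$ plus its predictable variation; (ii) making sure the bias term $\lambda V_t^{-1}\theta_*$ does not clutter the final bound, which the one-line $\lambda^{1/2}S$ estimate handles; and (iii) checking that the simplified worst-case logarithm displayed in the theorem is indeed an upper bound for the determinant ratio above for the relevant range of $d,\lambda,L,t$.
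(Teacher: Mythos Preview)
Your proposal is correct and is essentially identical to the paper's own argument: the paper also writes $\hth_t-\theta_*=V_t^{-1}X^\top\beps-\lambda V_t^{-1}\theta_*$, applies Cauchy--Schwarz in $\ip{\cdot}{\cdot}_{V_t^{-1}}$, bounds the bias by $\lambda^{1/2}\norm{\theta_*}$ via $\lambda_{\min}(V_t)\ge\lambda$, and then invokes Corollary~\ref{cor:union} (with Lemma~\ref{lem:proj}/\eqref{eq:worstcase} for the worst-case form). The bookkeeping points (i)--(iii) you flag are exactly the only things left to check, and the paper treats them at the same level of informality.
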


\begin{remark}
We see that $\lambda\ra \infty$ increases the second term (the ``bias term'') in the parenthesis of the estimate. In fact, $\lambda\ra\infty$ for $n$ fixed gives $\lambda^{1/2} \normm{x}{V_t^{-1}} \ra {\rm const}$ (as it should be).
Decreasing $\lambda$, on the other hand increases $\normm{x}{V_t^{-1}}$ and the $\log$ term, while it decreases the bias term $\lambda^{1/2}S$.
\end{remark}

From the above result, we immediately obtain confidence bounds for $\theta_*$: 
\begin{cor}\label{eq:confellipsbound}
Under the condition of Theorem~\ref{eq:lsthm}, with probability at least $1-\delta$,
\[
\forall t\geq 1,\quad \normm{\hth_{t} -  \theta_*}{V_{t}}
 \le
   \keyboundappliedp{t} + \lambda^{1/2}\, S.
\]
Also,  with probability at least $1-\delta$,
\[
\forall t\geq 1,\quad \normm{\hth_{t} -  \theta_*}{V_{t}}
 \le
   \keyboundappliedworstp{t} + \lambda^{1/2}\, S.
\]
\end{cor}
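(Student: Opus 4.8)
The plan is to obtain the confidence ellipsoid as an immediate consequence of the pointwise prediction-error bound in Theorem~\ref{eq:lsthm}, by choosing the free covariate $x$ appearing there to point in the worst-case direction. The underlying observation is the dual representation of a weighted norm: for any positive definite matrix $V_t$ and any vector $z\in\real^d$,
\[
\normm{z}{V_t} = \sup_{x\neq 0}\frac{\ip{x}{z}}{\normm{x}{V_t^{-1}}},
\]
and the supremum is attained (up to positive scaling) at $x = V_t z$, for which $\ip{x}{z} = \normm{z}{V_t}^2$ and $\normm{x}{V_t^{-1}}^2 = z\ttop V_t V_t^{-1} V_t z = \normm{z}{V_t}^2$, i.e.\ $\normm{x}{V_t^{-1}} = \normm{z}{V_t}$.

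First I would observe that the high-probability statement of Theorem~\ref{eq:lsthm} holds on a \emph{single} event of probability at least $1-\delta$ that does not depend on $x$ --- namely the event $\{\forall t\ge 1:\ \normm{X\ttop\beps}{V_t^{-1}}\le \keyboundappliedp{t}\}$ furnished by Corollary~\ref{cor:union} --- and that on this event the bound $|x\ttop\hth_t - x\ttop\theta_*|\le \normm{x}{V_t^{-1}}\big(\keyboundappliedp{t}+\lambda^{1/2}S\big)$ is valid simultaneously for \emph{every} $\real^d$-valued random variable $x$, since the right-hand side depends on $x$ only through the explicit scalar factor $\normm{x}{V_t^{-1}}$. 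This is exactly what licenses substituting a data-dependent direction.

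Then, working on that event, I would fix $t\ge 1$ and apply Theorem~\ref{eq:lsthm} with the (measurable) choice $x = V_t(\hth_t - \theta_*)$. By the identity above, the left-hand side is $\normm{\hth_t-\theta_*}{V_t}^2$ and the factor $\normm{x}{V_t^{-1}}$ equals $\normm{\hth_t-\theta_*}{V_t}$, so
\[
\normm{\hth_t-\theta_*}{V_t}^2 \le \normm{\hth_t-\theta_*}{V_t}\Big(\keyboundappliedp{t}+\lambda^{1/2}S\Big).
\]
If $\normm{\hth_t-\theta_*}{V_t}=0$ the claimed inequality is trivial; otherwise divide both sides by $\normm{\hth_t-\theta_*}{V_t}$. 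Since the argument is carried out on one event valid for all $t\ge 1$, the first displayed bound of the corollary follows. The second bound is obtained identically, starting instead from the worst-case branch of Theorem~\ref{eq:lsthm} (which holds on its own event of probability at least $1-\delta$).

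There is essentially no hard step here; the only point requiring a moment's care --- and the only place where one could slip --- is the uniformity/measurability remark in the second paragraph: one must be sure the $1-\delta$ event is the one from Corollary~\ref{cor:union} and does not itself depend on $x$, so that plugging in the random direction $V_t(\hth_t-\theta_*)$ is legitimate. Alternatively, one can bypass Theorem~\ref{eq:lsthm} altogether and argue directly: from $\hth_t-\theta_* = V_t^{-1}(X\ttop\beps - \lambda\theta_*)$ one gets $\normm{\hth_t-\theta_*}{V_t} = \normm{X\ttop\beps-\lambda\theta_*}{V_t^{-1}} \le \normm{X\ttop\beps}{V_t^{-1}} + \lambda^{1/2}S$ by the triangle inequality and $\normm{\theta_*}{V_t^{-1}}\le \lambda^{-1/2}\norm{\theta_*}\le \lambda^{-1/2}S$, and then one applies Corollary~\ref{cor:union} to the first term; this route also makes the role of the bias term $\lambda^{1/2}S$ transparent.
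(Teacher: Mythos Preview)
Your argument is correct and is essentially identical to the paper's own proof: you substitute $x = V_t(\hth_t-\theta_*)$ into the bound of Theorem~\ref{eq:lsthm}, use $\normm{V_t(\hth_t-\theta_*)}{V_t^{-1}} = \normm{\hth_t-\theta_*}{V_t}$, and divide through (or note the trivial case). Your extra remarks on why the $1-\delta$ event is independent of $x$, and the alternative triangle-inequality derivation, are sound elaborations but not needed for the core proof.
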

\begin{proof}
Plugging in  $x = V_{t}(\hth_{t} - \theta_*)$ into~\eqref{eq:lsbound}, we get
\beq\label{eq:cs1}
\normm{\hth_{t} -  \theta_*}{V_{t}}^2
 \le
  \normm{V_{t}(\hth_{t} - \theta_*)}{V_{t}^{-1}} \left( \keyboundappliedp{t} + \lambda^{1/2}\, S\right).
\eeq
Now, $\normm{V_{t}(\hth_{t} - \theta_*)}{V_{t}^{-1}}^2 = \normm{\hth_{t} -  \theta_*}{V_{t}}^2$ and therefore 
either $\normm{\hth_{t} -  \theta_*}{V_{t}}=0$, in which case the conclusion holds, or we can divide both sides of~\eqref{eq:cs1} by $\normm{\hth_{t} -  \theta_*}{V_{t}}$ to obtain the desired result.
\end{proof}

\begin{remark}
In fact, 
 the theorem and the corollary are equivalent.
To see this note that $x\ttop (\hth_{t}-\theta_*) = (\hth_{t} - \theta_*)\ttop V_{t}^{1/2} V_{t}^{-1/2} x$,
thus
\[
\sup_{x\not=0} \frac{ |x\ttop (\hth_{t}-\theta_*)| }{  \phantom{\mbox{}_{V}}\normm{x}{V_{t}^{-1}} } = \normm{\hth_{t}-\theta_*}{V_{t}}.
\]
\end{remark}
\begin{remark}
The above bound could be compared with a similar bound of \citet{DaniStoch} whose bound, under identical conditions, states that (with appropriate initialization) 
with probability $1-\delta$,
\beq\label{eq:danibound}
\mbox{for all}\,\, t\,\,\mbox{large enough,}\qquad
\normm{\hth_t -  \theta_*}{V_t}
\le R \max\left\{ \sqrt{128 \,d \log(t) \, \log\left(\frac{t^2}{\delta}\right)}, 
   							\frac83 \, \log\left(\frac{t^2}{\delta}\right) 
				\right\}\,,
\eeq
where large enough means that $t$ satisfies $0<\delta<t^2 e^{-1/16}$. Denote by $\sqrt{\beta_t(\delta)}$ the right-hand side in the above bound. The restriction on $t$ comes from the fact that $\beta_t(\delta) \ge 2d (1+2 \log(t))$ is needed in the proof of the last inequality of their Theorem~5. 

On the other hand, Theorem~\ref{thm:vvmartingaletail2} gives rise to the following result:
For any {\em fixed} $t\ge 2$, for any $0<\delta<1$, with probability at least $1-\delta$,
\[
\normm{\hth_t -  \theta_*}{V_t}
\le 2\, \kappa^2 R  \sqrt{ \log t} \,\sqrt{d\,\log(t) +\log(1/\delta)} + \lambda^{1/2} S\,,
\]
where $\kappa$ is as in Theorem~\ref{thm:vvmartingaletail2}.
To get a uniform bound one can use a union bound with $\delta_t = \delta/t^2$. Then $\sum_{t=2}^\infty \delta_t =
\delta (\tfrac{\pi^2}{6} - 1) \le \delta$. This thus gives that for any $0<\delta<1$, with probability at least $1-\delta$,
\[
\mbox{for all}\quad t\ge 2, \quad
\normm{\hth_t -  \theta_*}{V_t}
\le 2\, \kappa^2 R  \sqrt{ \log t} \,\sqrt{d\,\log(t) +\log(t^2/\delta)} + \lambda^{1/2} S\,,
\]
This looks tighter than~\eqref{eq:danibound}, but is still lagging beyond the result of Corollary~\ref{eq:confellipsbound}.
\end{remark}

\subsection{The Linear Bandit Problem}
We now turn our attention to the linear bandit problem. Assume the actions lie in $\MD\subset \real^d$ and for any $x\in \MD$, $\norm{x}^2\leq L$. Assume the reward of taking action $x\in \MD$ has the form of
$$h_t(x)=\theta_*\ttop x + \eta_t$$
and assume $\forall x\in \MD,\, \theta_*\ttop x \in [-1,1]$. Define the regret by 
$$R(T)=\sum_{t=1}^T (\theta_*\ttop x_*  - \theta_*\ttop x_t),$$ 
where $x_*$ is the optimal action ($x_* = \argmax_{x\in \MD} \theta_*\ttop x$). 
Define the confidence set
\beq
\label{eq:confSet}
\MC_t(\delta) = \set{\theta: (\theta-\hat{\theta}_t)\ttop V_t(\theta-\hat{\theta}_t)\leq \beta_t(\delta) },
\eeq
where
$$\beta_t(\delta) = \left( \keyboundappliedp{t} + \lambda^{1/2}\, S\right)^2.$$
Consider the \textsc{ConfidenceBall} algorithm of~\citet{DaniStoch}. We use the confidence intervals~\eqref{eq:confSet} and change the action selection rule accordingly. Hence, at time $t$, we define $\tilde{\theta}_t$ and $x_t$ by the following equation:
\beq
\label{eq:Eq2}
(\tilde{\theta}_t, x_t) = \argmax_{(\theta, x)\in \MC_t(\delta)\times \MD} \theta\ttop x.
\eeq
The algorithm is shown in Table~\ref{alg:LinBandit}.

\begin{table}
\begin{center}
\framebox{\parbox{12cm}{
\begin{algorithmic}
\STATE \textbf{Input}: Confidence $0<\delta<1$.
\FOR{$t:=1,2,\dots$}
\STATE $(\tilde{\theta}_t, x_t) = \argmax_{(\theta, x)\in \MC_t(\delta)\times \MD} \theta\ttop x$.
\STATE Play $x_t$ and observe reward $h_t(x_t)$.
\STATE Update $V_t$ and $C_t$.
\ENDFOR
\end{algorithmic}
}}
\end{center}
\caption{The Linear Bandit Algorithm}
\label{alg:LinBandit}
\end{table}

\begin{thm}
\label{thm:ConfEllpImp}
With probability at least $1-\delta$, the regret of the Linear Bandit Algorithm shown in Table~\ref{alg:LinBandit} satisfies
$$\forall T\geq 1,\quad R(T) \leq 4 \sqrt{T d \log (\lambda + T L /d)} \left( \lambda^{1/2} S + R \sqrt{2\log 1/\delta + d \log( 1+ T L/(\lambda d) )} \right).$$
\end{thm}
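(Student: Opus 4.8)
The plan is to run the classical optimism-in-the-face-of-uncertainty analysis of \citet{DaniStoch}, but fed with the tighter, uniformly-valid confidence ellipsoid of Corollary~\ref{eq:confellipsbound}. Observe first that the set $\MC_t(\delta)$ from~\eqref{eq:confSet} is exactly $\{\theta : \normm{\hth_t-\theta}{V_t}^2 \le \beta_t(\delta)\}$, and that $\beta_t(\delta)$ is chosen precisely so that Corollary~\ref{eq:confellipsbound} reads: with probability at least $1-\delta$, the event $\mathcal{E} = \{\theta_* \in \MC_t(\delta)\ \text{for all}\ t\ge 1\}$ holds. All remaining steps are carried out on $\mathcal{E}$.

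Step 1: bound the instantaneous regret $r_t = \theta_*\ttop x_* - \theta_*\ttop x_t$. Since $(\tilde\theta_t,x_t)$ maximizes $\theta\ttop x$ over $\MC_t(\delta)\times\MD$ in~\eqref{eq:Eq2} and $(\theta_*,x_*)$ is feasible on $\mathcal{E}$, we have $\tilde\theta_t\ttop x_t \ge \theta_*\ttop x_* \ge \theta_*\ttop x_t$, hence
\[
r_t \le (\tilde\theta_t-\theta_*)\ttop x_t \le \normm{\tilde\theta_t-\theta_*}{V_t}\,\normm{x_t}{V_t^{-1}} \le 2\sqrt{\beta_t(\delta)}\,\normm{x_t}{V_t^{-1}},
\]
by Cauchy--Schwarz and the triangle inequality ($\tilde\theta_t,\theta_*\in\MC_t(\delta)$). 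Combining this with the trivial bound $r_t \le 2$ (from $\theta_*\ttop x\in[-1,1]$), and using that $\beta_t(\delta)$ is nondecreasing in $t$ with $\beta_t(\delta)\ge 1$, one obtains $r_t^2 \le 4\,\beta_T(\delta)\,\big(\normm{x_t}{V_t^{-1}}^2\wedge 1\big)$.

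Step 2: sum over time, invoke the potential lemma, and bound $\beta_T(\delta)$. By Cauchy--Schwarz in $t$, $R(T) = \sum_{t=1}^T r_t \le \sqrt{T\sum_{t=1}^T r_t^2} \le 2\sqrt{\,T\,\beta_T(\delta)\sum_{t=1}^T(\normm{x_t}{V_t^{-1}}^2\wedge 1)}$. Apply the second inequality of Lemma~\ref{lem:proj} with the identification $m_{k-1}=x_k$, $V=\lambda I$ (so $\Vta{k-1}$ there equals $V_k$ here and $\Vta{T}=V_{T+1}$), together with $\norm{x_i}^2\le L$ and $\log\det(\lambda I)\ge 0$, to get $\sum_{t=1}^T(\normm{x_t}{V_t^{-1}}^2\wedge 1) \le 2(\log\det V_{T+1}-\log\det(\lambda I)) \le 2d\log(\lambda+TL/d)$. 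Finally, since $V_T \preceq ((\lambda d+TL)/d)\,I$ we have $\det(V_T)^{1/2}\det(\lambda I)^{-1/2}\le(1+TL/(\lambda d))^{d/2}$, so $\sqrt{\beta_T(\delta)} \le \lambda^{1/2}S + R\sqrt{2\log(1/\delta)+d\log(1+TL/(\lambda d))}$. Substituting both estimates into the displayed square root and absorbing $2\sqrt2\le 4$ into the constant yields the stated bound for all $T\ge 1$.

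The calculation is essentially mechanical once these ingredients are lined up; there is no genuinely hard step. The one point requiring care is that the confidence statement must hold simultaneously for all $t\ge1$, which is exactly what the stopping-time construction behind Corollary~\ref{cor:union} (hence Corollary~\ref{eq:confellipsbound}) delivers, so no separate union bound over $t$ — and therefore no extraneous $\log T$ factor — is incurred. The remaining bookkeeping items are the $\wedge 1$ truncation, which is what makes Lemma~\ref{lem:proj} applicable for every $\lambda>0$ and every $T$ with no ``$T$ large enough'' restriction (in contrast to~\eqref{eq:danibound}), and the monotonicity of $\beta_t(\delta)$, used to pull $\beta_T(\delta)$ out of the sum.
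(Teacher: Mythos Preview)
Your proposal is correct and follows essentially the same route as the paper's own proof: optimism gives $r_t \le 2\sqrt{\beta_t(\delta)}\,\normm{x_t}{V_t^{-1}}$, the trivial bound $r_t\le 2$ supplies the $\wedge 1$ truncation, Cauchy--Schwarz over $t$ turns the sum into $\sqrt{T\sum r_t^2}$, and Lemma~\ref{lem:proj} together with the determinant bound on $V_T$ closes the argument. The only cosmetic difference is that you bound $\normm{\tilde\theta_t-\theta_*}{V_t}$ directly via the triangle inequality through $\hth_t$, whereas the paper first splits $(\tilde\theta_t-\theta_*)\ttop x_t$ into $(\hth_t-\theta_*)\ttop x_t+(\tilde\theta_t-\hth_t)\ttop x_t$ and applies Cauchy--Schwarz to each piece; the two are equivalent and yield the same $2\sqrt{\beta_t(\delta)}$ factor.
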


\begin{proof}

Lets decompose the instantaneous regret as follows:
\begin{align}
\label{eqn:regretDecomposed}
\notag
r_t &= \theta_*\ttop x_* - \theta_*\ttop x_t \\
\notag
&\leq \tilde{\theta}_t\ttop x_t - \theta_*\ttop x_t \\
\notag
&=(\tilde{\theta}_t-\theta_* )\ttop x_t \\
\notag
&= (\hat{\theta}_t-\theta_*)\ttop x_t + (\tilde{\theta}_t-\hat{\theta}_t)\ttop x_t \\
&\leq \sqrt{\beta_t(\delta)} \normm{x_t}{V_{t}^{-1}},
\end{align}
where the last step holds by Cauchy-Schwarz. 
Using~\eqref{eqn:regretDecomposed} and the fact that $r_t\leq 2$, we get that
$$r_t \leq 2\min(\sqrt{\beta_t(\delta)}\normm{x_t}{V_{t}^{-1}}^2,1) \leq 2\sqrt{\beta_t(\delta)}\min(\normm{x_t}{V_{t}^{-1}}^2,1).$$
Thus, with probability at least $1-\delta$, $\forall T\geq 1$
\begin{align*}
R(T) &\leq \sqrt{T \sum_{t=1}^T r_t^2} \leq  \sqrt{8\beta_{T}T \sum_{t=1}^{T} \min(w_t^2,1)} \leq 4 \sqrt{\beta_T T \log (\det(V_{T}))}\\
&\leq 4 \sqrt{T d \log (\lambda + t L /d)} \left( \lambda^{1/2} S + R \sqrt{2\log 1/\delta + d \log( 1+t L/(\lambda d) )} \right).
\end{align*}
where the last two steps follow from Lemma~\ref{lem:proj}.
\end{proof}



\subsection{Saving Computation}
\label{sec:effAlg}

The action selection rule~\eqref{eq:Eq2} is NP-hard in general~\citep{DaniStoch}. In this section, we show that we essentially need to solve this problem only $O(\log t)$ times up to time $t$ and hence saving computations. Algorithm~\ref{alg:LinBanditFixAct} achieves this objective by changing its policy only when the volume of the confidence set is halved and still enjoyes almost the same regret bound as for Algorithm~\ref{alg:LinBandit}.

\begin{table}
\begin{center}
\framebox{\parbox{12cm}{
\begin{algorithmic}
\STATE \textbf{Input}: Confidence $0<\delta<1$.
\STATE $\tau=1$ \COMMENT{This is the last timestep that we changed the action}
\FOR{$t:=1,2,\dots$}
\IF{$\det(V_t)>2\det(V_\tau)$}
\STATE $(\tilde{\theta}_t, x_t) = \argmax_{(\theta, x)\in \MC_t(\delta)\times \MD} \theta\ttop x$.
\STATE $\tau=t$.
\ENDIF
\STATE $x_t=x_\tau$.
\STATE Play $x_t$ and observe reward $h_t(x_t)$.
\ENDFOR
\end{algorithmic}
}}
\end{center}
\caption{The Linear Bandit Algorithm}
\label{alg:LinBanditFixAct}
\end{table}

\begin{thm}
\label{thm:ConfEllpEff}
With probability at least $1-\delta$, $\forall T\geq 1$, the regret of the Linear Bandit Algorithm shown in Table~\ref{alg:LinBanditFixAct} satisfies
\begin{align*}
R(T) \leq 4 \sqrt{2T d \log (\lambda + T L /d)} \left( \lambda^{1/2} S + R \sqrt{2\log 1/\delta + d \log( 1+T L/(\lambda d) )} \right)+ 4\sqrt{d\log(T/d)}.
\end{align*}
\end{thm}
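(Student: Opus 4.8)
The plan is to re-run the proof of Theorem~\ref{thm:ConfEllpImp} essentially verbatim, with one new ingredient: a device that transfers a confidence width computed at a \emph{stale} design matrix to the current one, which is exactly what the $\det(V_t)>2\det(V_\tau)$ switching rule is engineered to make cheap. Throughout I would work on the event of probability at least $1-\delta$ on which $\theta_*\in\MC_t(\delta)$ for all $t\ge1$ (Corollary~\ref{eq:confellipsbound}). Fix a round $t$ and let $\tau=\tau(t)\le t$ be the most recent round at which the action was recomputed, so $x_t=x_\tau$ and $(\tilde\theta_\tau,x_\tau)=\argmax_{(\theta,x)\in\MC_\tau(\delta)\times\MD}\theta\ttop x$. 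Since $(\theta_*,x_*)$ is feasible for this optimization, $\tilde\theta_\tau\ttop x_\tau\ge\theta_*\ttop x_*$, and hence, arguing as in~\eqref{eqn:regretDecomposed} but at time $\tau$,
\[
r_t=\theta_*\ttop x_*-\theta_*\ttop x_t\le(\tilde\theta_\tau-\theta_*)\ttop x_\tau\le\normm{\tilde\theta_\tau-\theta_*}{V_\tau}\normm{x_\tau}{V_\tau^{-1}}\le 2\sqrt{\beta_\tau(\delta)}\,\normm{x_\tau}{V_\tau^{-1}},
\]
the last step because $\tilde\theta_\tau,\theta_*\in\MC_\tau(\delta)$ are each within $\sqrt{\beta_\tau(\delta)}$ of $\hat\theta_\tau$ in $V_\tau$-norm.

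The key step is to pass from $\normm{x_\tau}{V_\tau^{-1}}$ to $\normm{x_t}{V_t^{-1}}=\normm{x_\tau}{V_t^{-1}}$. Since no switch occurred between $\tau$ and $t$, the switching rule gives $\det V_t\le 2\det V_\tau$, and $V_\tau\preceq V_t$ gives $V_\tau^{-1}-V_t^{-1}\succeq0$. Using the elementary fact that for $A=B+C$ with $B\succ0$, $C\succeq0$ one has $\sup_{x\ne0}(x\ttop A x)/(x\ttop B x)\le\det A/\det B$ (a short computation after diagonalizing $B^{-1/2}CB^{-1/2}$, bounding $\lambda_{\max}$ by $\prod_i(1+\lambda_i)$), applied with $A=V_\tau^{-1}$, $B=V_t^{-1}$, I get $\normm{x}{V_\tau^{-1}}^2\le(\det V_t/\det V_\tau)\normm{x}{V_t^{-1}}^2\le 2\normm{x}{V_t^{-1}}^2$ for every $x$. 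Therefore $r_t\le 2\sqrt{2\beta_\tau(\delta)}\,\normm{x_t}{V_t^{-1}}\le 2\sqrt2\,\sqrt{\beta_t(\delta)}\,\normm{x_t}{V_t^{-1}}$, the last inequality because $\beta_{\cdot}(\delta)$ is nondecreasing (it depends on the round only through $\det V$).

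From here the argument is the one in Theorem~\ref{thm:ConfEllpImp}: combining the bound on $r_t$ with $r_t\le 2$ yields $r_t\le 2\sqrt2\,\sqrt{\beta_T(\delta)}\min(\normm{x_t}{V_t^{-1}},1)$ up to the usual truncation bookkeeping, and then Cauchy--Schwarz over $t=1,\dots,T$ together with Lemma~\ref{lem:proj} ($\sum_{t=1}^T\min(\normm{x_t}{V_t^{-1}}^2,1)\le 2\log(\det V_{T+1}/\det(\lambda I))\le 2d\log(\lambda+TL/d)$) gives $R(T)\le 4\sqrt{T\,\beta_T(\delta)\,d\log(\lambda+TL/d)}$; expanding $\beta_T(\delta)$ exactly as in Theorem~\ref{thm:ConfEllpImp} produces the first term of the claimed bound, the extra $\sqrt2$ relative to that theorem being precisely the price of the doubling transfer. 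The residual $4\sqrt{d\log(T/d)}$ is a lower-order correction, bounding via a separate Cauchy--Schwarz over the short index set the contribution of the $O(d\log(\lambda+TL/d))$ rounds in which $\normm{x_t}{V_t^{-1}}$ is not yet small (plus the initialization round), where the crude bound $r_t\le2$ must be used. The main obstacle is the determinant-doubling transfer of the second paragraph: it is the only place where the modified algorithm's stale play enters, and it is what forces the $\sqrt2$; everything else is a rerun of Theorem~\ref{thm:ConfEllpImp}, with a secondary nuisance in getting the constants around the $\min(\cdot,1)$ truncation and the large-potential rounds to match the stated form.
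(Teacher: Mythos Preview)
Your approach is essentially the same as the paper's: both hinge on the determinant-ratio inequality $\sup_{x\ne0}(x\ttop Ax)/(x\ttop Bx)\le\det A/\det B$ for $A=B+C$, $C\succeq0$ (the paper states this as a separate lemma and proves it by iterated rank-one updates, whereas your diagonalization argument is shorter and equally valid), and then rerun Theorem~\ref{thm:ConfEllpImp} with an extra $\sqrt2$.

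The one cosmetic difference is \emph{where} the lemma is applied. The paper keeps $\normm{x_t}{V_t^{-1}}$ from the start (Cauchy--Schwarz with $V_t$) and uses the lemma on the $\theta$-side, bounding $\normm{\theta-\hat\theta_{\tau_t}}{V_t}\le\sqrt{\det V_t/\det V_{\tau_t}}\,\normm{\theta-\hat\theta_{\tau_t}}{V_{\tau_t}}\le\sqrt{2\beta_{\tau_t}}$; you instead do Cauchy--Schwarz with $V_\tau$ and apply the lemma to the inverses on the $x$-side, bounding $\normm{x}{V_\tau^{-1}}\le\sqrt{\det V_t/\det V_\tau}\,\normm{x}{V_t^{-1}}$. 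Both routes land on $r_t\le2\sqrt{2\beta_{\tau_t}}\,\normm{x_t}{V_t^{-1}}$ and are interchangeable. As for the trailing $4\sqrt{d\log(T/d)}$: the paper's own proof does not produce it either (its displayed conclusion omits the term), so your hand-waving there is no worse than the original; treat it as slack in the statement rather than something the argument must manufacture.
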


First, we prove the following lemma:

\begin{lem}
\label{lem:vols}
Let $A$, $B$ and $C$ be positive semi-definite matrices such that $A=B+C$. Then, we have that
$$\sup_{x\neq 0} \frac{x\ttop A x}{x\ttop B x}\leq \frac{\det(A)}{\det(B)}.$$
\end{lem}

\begin{proof}

We consider first a simple case.
Let $A=B+m m\ttop$, $B$ positive definite. 
Let $x\neq 0$ be an arbitrary vector. Using the Cauchy-Schwartz inequality, we get
$$(x\ttop m)^2 = (x\ttop B^{1/2} B^{-1/2} m)^2 \leq \norm{B^{1/2} x}^2 \norm{B^{-1/2} m}^2 = \norm{x}_{B}^2 \norm{m}_{B^{-1}}^2.$$
Thus,
$$x\ttop (B + mm\ttop)x \leq x\ttop B x + \norm{x}_{B}^2 \norm{m}_{B^{-1}}^2=(1+\norm{m}_{B^{-1}}^2)\norm{x}_{B}^2$$
and so
$$\frac{x\ttop A x}{x\ttop B x}\leq 1+\norm{m}_{B^{-1}}^2.$$
We also have that 
\begin{align*}
\det(A) &= \det(B + m m\ttop) = \det(B) \det(I + B^{-1/2} m (B^{-1/2} m)\ttop) = \det(B) (1+ \norm{m}_{B^{-1}}^2),
\end{align*}
thus finishing the proof of this case.

If $A=B + m_1 m_1\ttop + \dots + m_{t-1} m_{t-1}\ttop,$ then define $V_{s}=B + m_1 m_1\ttop + \dots + m_{s-1} m_{s-1}\ttop$ and use 
$$\frac{x\ttop A x}{x\ttop B x} = \frac{x\ttop V_t x}{x\ttop V_{t-1} x}\frac{x\ttop V_{t-1} x}{x\ttop V_{t-2} x} \dots \frac{x\ttop V_2 x}{x\ttop B x}.$$
By the above argument, since all the terms are positive, we get
$$\frac{x\ttop A x}{x\ttop B x} \leq \frac{\det(V_t)}{\det(V_{t-1})}\frac{\det(V_{t-1})}{\det(V_{t-2})}\dots \frac{\det(V_2)}{\det(B)}=\frac{\det(V_t)}{\det(B)}=\frac{\det(A)}{\det(B)}.$$
This finishes the proof of this case. 

Now, if $C$ is a positive definite matrix, then the eigendecomposition of $C$ gives $C=U\ttop \Lambda U$ , where $U$ is orthonormal and $\Lambda$ is positive diagonal matrix. 
This, in fact gives that $C$ can be written as the sum of at most $d$ rank-one matrices, finishing the proof for the general case.

\end{proof}

\begin{proof} [Proof of Theorem~\ref{thm:ConfEllpEff}]
Let $\tau_t$ be the smallest timestep $\leq t$ such that $x_t=x_{\tau_t}$. By an argument similar to the one used in Theorem~\ref{thm:ConfEllpImp}, we have 
$$r_t \leq (\hat{\theta}_{\tau_t} - \theta_*)\ttop x_t + (\tilde{\theta}_{\tau_t} - \hat{\theta}_{\tau_t})\ttop x_t.$$
We also have that for all $\theta\in \MC_{\tau_t}$ and $x$,
\begin{align*}
\abs{(\theta - \hat{\theta}_{\tau_t})\ttop x} &\leq \norm{V_t^{1/2}(\theta - \hat{\theta}_\tau)}\sqrt{x\ttop V_t^{-1}x}\\
&\leq \norm{V_{\tau_t}^{1/2}(\theta - \hat{\theta}_{\tau_t})}\sqrt{\frac{\det(V_t)}{\det(V_{\tau_t})}}\sqrt{x\ttop V_t^{-1}x}\\
&\leq \sqrt{2}\norm{V_{\tau_t}^{1/2}(\theta - \hat{\theta}_\tau)}\sqrt{x\ttop V_t^{-1}x}\\
&\leq \sqrt{2 \beta_{\tau_t}}\sqrt{x\ttop V_t^{-1}x},
\end{align*} 
where the second step follows from Lemma~\ref{lem:vols}, and the third step follows from the fact that at time $t$ we have $\det(V_t)<2 \det(V_{\tau_t})$.
The rest of the argument is identical to that of Theorem~\ref{thm:ConfEllpImp}. We conclude that with probability at least $1-\delta$, $\forall T\geq 1$,
$$R(T) \leq 4 \sqrt{2T d \log (\lambda + t L /d)} \left( \lambda^{1/2} S + R \sqrt{2\log 1/\delta + d \log( 1+t L/(\lambda d) )} \right).$$
\end{proof}


\subsection{Problem Dependent Bound ($\Delta>0$)}

Let $\Delta$ be as defined in~\citep{DaniStoch}. In this section we assume that $\Delta>0$. This includes the case when the action set is a polytope. First we state a matrix perturbation theorem from~\citet{StewartSunBook} that will be used later.
\begin{thm}
[\citet{StewartSunBook}, Corollary~4.9]
\label{thm:Perturbation}
Let $A$ be a symmetric matrix with eigenvalues $\nu_1 \geq \nu_2 \geq \dotso \geq \nu_d$, $E$ be a symmetric matrix with eigenvalues $e_1\geq e_2 \geq \dotso \geq e_d$, and $V=A+E$ denote a symmetric perturbation of $A$ such that the eigenvalues of $V$ are $\tilde{\nu}_1 \geq \tilde{\nu}_2 \geq \dotso  \geq \tilde{\nu}_d$. Then, for $i=1,\dots,d$,
$$\tilde{\nu}_i\in [\nu_i+e_d, \nu_i+e_1].$$
\end{thm}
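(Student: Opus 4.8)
The plan is to derive this from the Courant--Fischer min-max characterization of the eigenvalues of a symmetric matrix. Recall that for a symmetric $d\times d$ matrix $A$ with eigenvalues $\nu_1\ge\cdots\ge\nu_d$, and for each $i\in\{1,\dots,d\}$,
\[
\nu_i = \max_{\substack{S\subseteq\real^d\\ \dim S = i}}\ \min_{\substack{x\in S\\ \norm{x}=1}} x\ttop A x,
\]
with an analogous $\min$--$\max$ form obtained by passing to $-A$. The first step is the elementary Rayleigh-quotient bound: for every unit vector $x$ one has $e_d \le x\ttop E x \le e_1$, since $E$ is symmetric with largest eigenvalue $e_1$ and smallest eigenvalue $e_d$.

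Next, fix $i$ and any $i$-dimensional subspace $S$. For a unit vector $x\in S$ write $x\ttop V x = x\ttop A x + x\ttop E x$, so that $x\ttop A x + e_d \le x\ttop V x \le x\ttop A x + e_1$ holds uniformly in $x$. Taking $\min_{x\in S,\ \norm{x}=1}$ of all three sides preserves the inequalities, because adding a fixed constant commutes with the minimum; then taking $\max$ over $i$-dimensional subspaces $S$ likewise preserves them. The left inequality gives $\nu_i + e_d \le \tilde\nu_i$. For the right inequality one argues symmetrically using the $\min$--$\max$ form of $\tilde\nu_i$ (equivalently, apply the lower bound just proved to the perturbation $-V = -A + (-E)$, whose relevant eigenvalues are $-\tilde\nu_i$, $-\nu_i$, $-e_1$), obtaining $\tilde\nu_i \le \nu_i + e_1$. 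Combining the two yields $\tilde\nu_i \in [\nu_i+e_d,\ \nu_i+e_1]$.

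There is no genuine obstacle here; the only point requiring a little care is that the perturbation $x\ttop E x$ lies in the fixed interval $[e_d,e_1]$ \emph{uniformly} over all unit vectors, which is exactly what lets the bounds pass through both the inner optimization over vectors in a subspace and the outer optimization over subspaces without any interaction between $A$ and $E$. An alternative, equally short route is to invoke Weyl's inequalities $\tilde\nu_{i+j-1}\le \nu_i + e_j$ and $\tilde\nu_{i+j-d}\ge \nu_i + e_j$ and specialize to $j=1$ and $j=d$ respectively, but the Courant--Fischer derivation above is self-contained.
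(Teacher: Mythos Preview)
Your proof is correct. The Courant--Fischer argument goes through exactly as you describe: the uniform bound $e_d\le x\ttop E x\le e_1$ for unit $x$ is the only input needed, and both the inner $\min$ and outer $\max$ preserve the sandwich $x\ttop A x+e_d\le x\ttop V x\le x\ttop A x+e_1$. In fact, the right-hand inequality $\tilde\nu_i\le\nu_i+e_1$ already falls out of the same $\max$--$\min$ chain you wrote down (since $\max_S(\min_{x\in S} x\ttop A x + e_1)=\nu_i+e_1$), so the separate symmetry/negation argument you sketch is not strictly needed, though it is of course also valid.

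As for comparison with the paper: there is nothing to compare against. The paper does not prove this statement; it simply quotes it as Corollary~4.9 from \citet{StewartSunBook} and uses it as a black box in the proof of Theorem~\ref{thm:ConfEllpDeltaImp}. Your write-up therefore supplies a self-contained justification that the paper omits, and the Courant--Fischer route (or the equivalent specialization of Weyl's inequalities you mention) is the standard textbook proof of this result.
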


\begin{thm}
\label{thm:ConfEllpDeltaImp}
Assume that $\Delta>0$ for the gap $\Delta$ defined in~\citep{DaniStoch}. Further assume that $\lambda \geq 1$ and $S\geq 1$. With probability at least $1-\delta$, $\forall T\geq 1$, the regret of the algorithm shown in Table~\ref{alg:LinBandit} satisfies
$$R(T)=\frac{16 R^2 \lambda S^2}{\Delta} \left(\log (L T) + (d-1)\log \frac{64 R^2 \lambda S^2 L}{\Delta^2} + 2(d-1)\log \left( d\log \frac{d\lambda + T L^2}{d} + 2\log (1/\delta)\right) + 2\log (1/\delta)\right)^2.$$
\end{thm}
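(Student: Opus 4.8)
The plan is to combine the regret decomposition already established in the proof of Theorem~\ref{thm:ConfEllpImp} with a lower bound on the rate at which the design matrix $V_t$ grows, which is forced by the assumption $\Delta > 0$. Recall that for each round $t$ we have $r_t \le \sqrt{\beta_t(\delta)}\,\normm{x_t}{V_t^{-1}}$, and by definition of $\Delta$ (the gap between the optimal action and the best suboptimal one, as in~\citep{DaniStoch}), whenever $r_t > 0$ we must have $r_t \ge \Delta$. Putting these together gives $\Delta \le \sqrt{\beta_t(\delta)}\,\normm{x_t}{V_t^{-1}}$, hence $\normm{x_t}{V_t^{-1}}^2 \ge \Delta^2/\beta_t(\delta)$ on every round contributing to the regret. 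Since $\beta_t(\delta)$ is increasing in $t$, it is bounded above on $[1,T]$ by $\beta_T(\delta)$, and one reads off from its definition and Lemma~\ref{lem:proj} (the $\log\det$ bound $d\log((d\lambda + TL^2)/d)$) the explicit estimate $\beta_T(\delta) \le \big(\lambda^{1/2}S + R\sqrt{2\log(1/\delta) + d\log(1+TL/(\lambda d))}\big)^2$, which after using $\lambda\ge 1$, $S\ge 1$ and crude bounding is at most a constant times $R^2\lambda S^2\big(d\log\frac{d\lambda+TL^2}{d} + 2\log(1/\delta)\big)$.

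The next step is to convert the lower bound $\normm{x_t}{V_t^{-1}}^2 \ge \Delta^2/\beta_T(\delta)$ into a lower bound on $\det(V_T)$, and then play it off against the upper bound on $\det(V_T)$ from Lemma~\ref{lem:proj}. From the identity~\eqref{eq:detvt}, on any round $t$ where the action is suboptimal we have $\det(V_{t+1}) = \det(V_t)(1 + \normm{x_t}{V_t^{-1}}^2) \ge \det(V_t)(1 + \Delta^2/\beta_T(\delta))$. If $N$ denotes the number of suboptimal rounds up to time $T$, then $\det(V_{T+1}) \ge \lambda^d (1 + \Delta^2/\beta_T(\delta))^N$, while Lemma~\ref{lem:proj} gives $\det(V_{T+1}) \le ((d\lambda + TL^2)/d)^d$. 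Taking logarithms yields
\[
N \le \frac{d\log\frac{d\lambda+TL^2}{d\lambda}}{\log(1 + \Delta^2/\beta_T(\delta))} \le \frac{d\beta_T(\delta)}{\Delta^2}\log\frac{d\lambda+TL^2}{d\lambda},
\]
using $\log(1+x)\ge x/2$ for $x$ small (one checks $\Delta^2/\beta_T(\delta)$ is indeed small, or absorbs a constant). Since $R(T) \le 2N$ (each suboptimal round costs regret at most $2$, as $\theta_*\ttop x\in[-1,1]$), this already gives a bound of the right shape $O\big(\frac{d\beta_T(\delta)}{\Delta^2}\log T\big)$, i.e. $O\big(\frac{d^2 R^2\lambda S^2}{\Delta^2}\log T \cdot (\log T + \log(1/\delta))\big)$ — but this is an extra factor of roughly $d$ worse than the claimed bound, and has the wrong $\Delta$-dependence ($\Delta^{-2}$ rather than the $\Delta^{-1}$ in the statement, times a squared log).

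To get the sharper bound in the theorem statement, the key refinement — and I expect this to be the main obstacle — is to avoid bounding $\normm{x_t}{V_t^{-1}}^2$ uniformly from below and instead track the eigenvalues of $V_t$ directly using the perturbation theorem (Theorem~\ref{thm:Perturbation}). The idea is that $r_t \le \sqrt{\beta_t(\delta)}\normm{x_t}{V_t^{-1}} \le \sqrt{\beta_t(\delta)}\,\lambda_{\min}(V_t)^{-1/2}\norm{x_t}$, so once $\lambda_{\min}(V_t)$ has grown past $\beta_t(\delta)L/\Delta^2$ no further suboptimal action is possible; but $\lambda_{\min}$ can fail to grow if suboptimal actions all lie in a low-dimensional face, which is exactly where the $(d-1)$ factors (rather than $d$) and the iterated logarithm in the statement come from. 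One sets up a recursion: after enough suboptimal pulls the component of $V_t$ in the relevant directions grows, Theorem~\ref{thm:Perturbation} guarantees $\lambda_{\min}$ tracks this growth up to the perturbation from the remaining $d-1$ directions, and $\beta_t(\delta)$ itself only grows logarithmically, producing the nested $\log\log$ term $2(d-1)\log(d\log\frac{d\lambda+TL^2}{d} + 2\log(1/\delta))$. Carefully balancing "how large must $\lambda_{\min}(V_t)$ be" against "how does $\beta_t$ depend on $\det(V_t)$ which depends on $\lambda_{\min}$" is the delicate self-referential estimate; once the threshold $T_0 = \frac{16R^2\lambda S^2}{\Delta}(\cdots)^2$ is identified as the fixed point of this recursion, the bound $R(T) \le \Delta \cdot T_0 / \Delta = $ (the stated expression) follows by noting every round after $T_0$ is optimal and summing the at-most-$T_0$ earlier rounds, each of regret at most $\Delta$ up to constants. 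I would model the bookkeeping closely on the problem-dependent analysis in~\citep{DaniStoch}, substituting our tighter $\beta_t(\delta)$ throughout.
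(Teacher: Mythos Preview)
Your first paragraph and the start of the second are fine, but the crude bound you derive there has the wrong form, as you note, and your proposed refinement in the third paragraph is headed in the wrong direction.

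The $\lambda_{\min}(V_t)$ argument does not work: $\lambda_{\min}(V_t)$ need not grow at all. If you play $x_*$ forever, $V_t = \lambda I + t\,x_* x_*^\top$ has $\lambda_{\min}(V_t)=\lambda$ for all $t$; the same can happen with suboptimal plays if they all lie in a proper subspace. So ``once $\lambda_{\min}(V_t)$ is large enough no further suboptimal action is possible'' is simply false as a mechanism, and no recursion on $\lambda_{\min}$ will close.

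What the paper actually does is quite different, and you are missing two concrete ideas. First, instead of bounding $R(T)\le 2N$, use $R(T)=\sum_t r_t \le \tfrac{1}{\Delta}\sum_t r_t^2$ (valid since each nonzero $r_t\ge\Delta$), which by the computation in Theorem~\ref{thm:ConfEllpImp} gives directly
\[
R(T)\;\le\;\frac{16\,\beta_T}{\Delta}\,\log\det(V_T).
\]
This is where the $1/\Delta$ (not $1/\Delta^2$) comes from. Second, the role of Theorem~\ref{thm:Perturbation} is to \emph{upper}-bound $\log\det(V_T)$, not to lower-bound $\lambda_{\min}$: split $V_T = (T-b_T)\,x_* x_*^\top + E_T$, where $b_T$ counts suboptimal rounds and $E_T$ collects their rank-one contributions. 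Since $(T-b_T)x_* x_*^\top$ has a single nonzero eigenvalue $\le LT$ and $E_T$ has all eigenvalues $\le \trace(E_T)\le Lb_T$, the perturbation theorem yields $\log\det(V_T)\le \log(LT)+(d-1)\log(Lb_T)$. Finally bound $b_T$ via $b_T\Delta\le R(T)\le \tfrac{16\beta_T}{\Delta}\log\det(V_T)$ together with the crude determinant estimate from Lemma~\ref{lem:proj}, and substitute back. That self-reference through $b_T$, not through $\lambda_{\min}$, is what produces the $(d-1)\log(\cdot)$ and the iterated-log terms in the statement.
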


\begin{proof}
First we bound the regret in terms of $\log \det(V_T)$. We have that
\begin{align}
\label{eq:probDep}
R(T) &= \sum_{t=1}^T r_t \leq \sum_{t=1}^T \frac{r_t^2}{\Delta} \leq \frac{16\beta_T}{\Delta} \log(\det(V_T)),
\end{align}
where the first inequality follows from the fact that either $r_t =0$ or $\Delta < r_t$, and the second inequality can be extracted from the proof of Theorem~\ref{thm:ConfEllpImp}. 
Let $b_t$ be the number of times we have played a sub-optimal action (an action $x_s$ for which $\theta_*\ttop x_* - \theta_*\ttop x_s \geq \Delta$) up to time $t$. Next we bound $\log \det(V_t)$ in terms of $b_t$. 
We bound the eigenvalues of $V_t$ by using Theorem~\ref{thm:Perturbation}.

Let $E_t=\sum_{s: x_s\neq x_*}^t x_s x_s\ttop$ and $A_t=V_t-E_t=(t-b_t)x_* x_*\ttop$. The only non-zero eigenvalue of $(t-b_t)x_* x_*\ttop$ is $(t-b_t)L^*$, where $L^* = x_*\ttop x_* \leq L$. Let the eigenvalues of $V_t$ and $E_t$ be $\lambda_1\geq \dots \geq \lambda_d$ and $e_1\geq \dots \geq e_d$ respectively. By Theorem~\ref{thm:Perturbation}, we have that
$$\lambda_1 \in [(t-b_t)L^*+e_d, (t-b_t)L^*+e_1]\quad\mbox{and}\quad \forall i\in \{2,\dots,d\},\, \lambda_i \in [e_d, e_1].$$
Thus,
\begin{equation*}
\det(V_t) = \prod_i^d \lambda_i \leq ((t-b_t)L^*+e_1)e_1^{d-1} \leq ((t-b_t)L+e_1)e_1^{d-1}.
\end{equation*}
Therefore,
$$\log \det(V_t)\leq \log((t-b_t)L+e_1) + (d-1)\log e_1.$$
Because $\trace(E)=\sum_{s: x_s\neq x_*}^t \trace(x_s x_s\ttop)\leq L b_t$, we conclude that $e_1 \leq L b_t$. Thus,
\begin{align}
\label{eq:withbt}
\notag
\log \det(V_t)&\leq\log((t-b_t)L+L b_t) + (d-1)\log (L b_t)\\
&=\log(L t) + (d-1)\log (L b_t).
\end{align}
With some calculations, we can show that
\beq
\label{eq:t1}
\beta_t \log \det V_t \leq 4 R^2 \lambda S^2 (2\log (1/\delta) + \log \det V_t)^2 \leq 4 R^2 \lambda S^2\left(d\log \frac{d\lambda + t L^2}{d} +2 \log \frac{1}{\delta}\right)^2,
\eeq
where the second inequality follows from Lemma~\ref{lem:proj}. 
Hence,
\beq
\label{eq:t2}
b_t \leq \frac{16\beta_t}{\Delta^2} \log(\det(V_t)) \leq \frac{64 R^2 \lambda S^2}{\Delta^2}\left(d\log \frac{d\lambda + t L^2}{d} +2 \log \frac{1}{\delta}\right)^2,
\eeq
where the first inequality follows from $R(t)\geq b_t \Delta$.  
Thus, with probability $1-\delta$, $\forall T\geq 1$,
\begin{align*}
R(T) &\leq \frac{16\beta_T}{\Delta} \log(\det(V_T)) \\
& \leq \frac{64 R^2 \lambda S^2}{\Delta} (\log(\det(V_T)) + 2\log (1/\delta))^2\\
&\leq \frac{16 R^2 \lambda S^2}{\Delta} (\log (L T) +  (d-1)\log (Lb_T) + 2\log (1/\delta))^2\\
&\leq \frac{16 R^2 \lambda S^2}{\Delta} \left(\log (L T) + (d-1)\log \frac{64 R^2 \lambda S^2 L}{\Delta^2} + 2(d-1)\log \left( d\log \frac{d\lambda + T L^2}{d} + 2\log (1/\delta)\right) + 2\log (1/\delta)\right)^2,
\end{align*}
where the first step follows from~\eqref{eq:probDep}, the second step follows from the first inequality in~\eqref{eq:t1}, the third step follows from~\eqref{eq:withbt}, and the last step follows from the second inequality in~\eqref{eq:t2}. 
\end{proof}

\begin{rem}
The problem dependent regret of~\citep{DaniStoch} scales like $O(\frac{d^2}{\Delta} \log^3 T)$, while our bound scales like $O(\frac{1}{\Delta}(\log^2 T + d \log T + d^2 \log\log T))$.
\end{rem}

\bibliography{all_bib}

\begin{thebibliography}{17}
\providecommand{\natexlab}[1]{#1}
\providecommand{\url}[1]{\texttt{#1}}
\expandafter\ifx\csname urlstyle\endcsname\relax
  \providecommand{\doi}[1]{doi: #1}\else
  \providecommand{\doi}{doi: \begingroup \urlstyle{rm}\Url}\fi

\bibitem[Antos et~al.(2010)Antos, Grover, and Szepesv\'ari]{AntosActive}
A.~Antos, V.~Grover, and Cs. Szepesv\'ari.
\newblock Active learning in heteroscedastic noise.
\newblock \emph{Theoretical Computer Science}, 411\penalty0 (29-30):\penalty0
  2712--2728, 2010.

\bibitem[Auer et~al.(2002)Auer, Cesa-Bianchi, and Fischer]{AuerBandit}
P.~Auer, N.~Cesa-Bianchi, and P.~Fischer.
\newblock Finite time analysis of the multiarmed bandit problem.
\newblock \emph{Machine Learning}, 47\penalty0 (2-3):\penalty0 235--256, 2002.

\bibitem[Auer(2003)]{Auer02:JMLR}
Peter Auer.
\newblock Using confidence bounds for exploitation-exploration trade-offs.
\newblock \emph{Journal of Machine Learning Research}, 3:\penalty0 397--422,
  2003.
\newblock ISSN 1533-7928.

\bibitem[Bubeck et~al.(2008)Bubeck, Munos, Stoltz, and
  Szepesv\'ari]{BuMuStSz08:NIPS}
S.~Bubeck, R.~Munos, G.~Stoltz, and {Cs}. Szepesv\'ari.
\newblock Online optimization in {X}-armed bandits.
\newblock In D.~Koller, D.~Schuurmans, Y.~Bengio, and L.~Bottou, editors,
  \emph{{NIPS-21}}, pages 201--208. MIT Press, 2008.

\bibitem[Cesa-Bianchi and Lugosi(2006)]{CBLu06:book}
N.~Cesa-Bianchi and G.~Lugosi.
\newblock \emph{Prediction, Learning, and Games}.
\newblock Cambridge University Press, New York, NY, USA, 2006.

\bibitem[Dani et~al.(2008)Dani, Hayes, and Kakade]{DaniStoch}
V.~Dani, T.P. Hayes, and S.M. Kakade.
\newblock Stochastic linear optimization under bandit feedback.
\newblock \emph{COLT-2008}, pages 355--366, 2008.

\bibitem[{de la Pe\~{n}a} et~al.(2004){de la Pe\~{n}a}, Klass, and
  Lai]{DeLaPenaKlaLai04}
V.H. {de la Pe\~{n}a}, M.J. Klass, and T.L. Lai.
\newblock Self-normalized processes: exponential inequalities, moment bounds
  and iterated logarithm laws.
\newblock \emph{Annals of Probability}, 32\penalty0 (3):\penalty0 1902--1933,
  2004.

\bibitem[{de la Pe\~{n}a} et~al.(2009){de la Pe\~{n}a}, Lai, and
  Shao]{delaPenaLaiShao:09}
V.H. {de la Pe\~{n}a}, T.L. Lai, and Q.-M. Shao.
\newblock \emph{Self-normalized processes: Limit theory and Statistical
  Applications}.
\newblock Springer, 2009.

\bibitem[Freedman(1975)]{Fre75}
D.A. Freedman.
\newblock On tail probabilities for martingales.
\newblock \emph{The Annals of Probability}, 3\penalty0 (1):\penalty0 100--118,
  1975.

\bibitem[Garivier and Moulines(2008)]{Garivier:2008p816}
A~Garivier and E~Moulines.
\newblock On upper-confidence bound policies for non-stationary bandit
  problems.
\newblock Technical report, LTCI, Dec 2008.
\newblock URL \url{http://arxiv.org/pdf/0805.3415}.

\bibitem[Lai and Robbins(1985)]{LaiRo85}
T.~L. Lai and H.~Robbins.
\newblock Asymptotically efficient adaptive allocation rules.
\newblock \emph{Advances in Applied Mathematics}, 6:\penalty0 4--22, 1985.

\bibitem[Lai and Wei(1982)]{LaiWei82}
T.L. Lai and C.Z. Wei.
\newblock Least squares estimates in stochastic regression models with
  applications to identification and control of dynamic systems.
\newblock \emph{The Annals of Statistics}, 10\penalty0 (1):\penalty0 154--166,
  1982.

\bibitem[Lai et~al.(1979)Lai, Robbins, and Wei.]{LaiRobWei79}
T.L. Lai, H.~Robbins, and C.Z. Wei.
\newblock Strong consistency of least squares estimates in multiple regression.
\newblock \emph{Proceedings of the National Academy of Sciences}, 75\penalty0
  (7):\penalty0 3034--3036, 1979.

\bibitem[Robbins(1952)]{ro52}
H.~Robbins.
\newblock Some aspects of the sequential design of experiments.
\newblock \emph{Bulletin of the American Mathematical Society}, 58:\penalty0
  527--535, 1952.

\bibitem[Robbins and Siegmund(1970)]{RobSie70}
H.~Robbins and D.~Siegmund.
\newblock Boundary crossing probabilities for the {W}iener process and sample
  sums.
\newblock \emph{Annals of Math. Statistics}, 41:\penalty0 1410--1429, 1970.

\bibitem[Rusmevichientong and Tsitsiklis(2010)]{RusTsi10}
P.~Rusmevichientong and J.N. Tsitsiklis.
\newblock Linearly parameterized bandits.
\newblock \emph{Mathematics of Operations Research}, 35\penalty0 (2):\penalty0
  395--411, 2010.

\bibitem[Stewart and Sun(1990)]{StewartSunBook}
G.W. Stewart and {Ji-guang} Sun.
\newblock \emph{Matrix Perturbation Theory}.
\newblock Academic Press, 1990.

\end{thebibliography}

\end{document}